\newtheorem{theorem}{Theorem}
\newtheorem{corollary}{Corollary}
\newtheorem{remark}{Remark}
\newtheorem{exam}{Example}
\newtheorem{proposition}{Proposition}
\newtheorem{definition}{Definition}
\newcommand{\thmref}[1]{Theorem~\ref{#1}}
\newcommand{\corref}[1]{Corollary~\ref{#1}}
\newcommand{\propref}[1]{Proposition~\ref{#1}}
\newcommand{\NN}{\mathbb{N}}
\newcommand{\RR}{\mathbb{R}}
\newcommand{\ZZ}{\mathbb{Z}}
\def\b{\bm}
\def\f{\frac}
\def\sph2{\mathbb{S}^{2}}
\def\sp1{\mathbb{S}^{1}}
\def\RR{\mathbb{R}}
\def\NN{\mathbb{N}}
\def\p{\mathcal{P}}
\def\g{\mathcal{G}}
\def\v{\mathcal{V}}
\def\w{\mathcal{E}}
\def\c{\mathcal{C}}
\def\d{\mathcal{D}}
\def\supp{\mathrm{supp}}
\def\dim{\mathrm{dim}}
\def\Span{\mathrm{span}}
\def\rank{\mathrm{Rank}}
\def\A{\mathcal{A}}
\def\per{\pi}
\def\ab{\allowbreak}
\begin{document}
\title{Permutation Equivariant Graph Framelets for\\ Heterophilous Graph Learning}


\author{Jianfei Li$^\dagger$,~
        Ruigang Zheng$^\dagger$,~
        Han Feng,~
        Ming~Li*,~\IEEEmembership{Member,~IEEE,}
        Xiaosheng Zhuang*
\thanks{J. Li, R. Zheng, H. Feng, and X. Zhuang are with the Department of Mathematics, City University of Hong Kong, Hong
Kong, China. (Emails: jianfeili2-c@my.cityu.edu.hk, ruigzheng2-c@my.cityu.edu.hk, hanfeng@cityu.edu.hk, xzhuang7@cityu.edu.hk)}
\thanks{M. Li is with the Key Laboratory of Intelligent Education Technology and Application of Zhejiang Province, Zhejiang Normal University, Jinhua, China. (Email:
mingli@zjnu.edu.cn)}
\thanks{$^\dagger$ Equal contribution}
\thanks{* Corresponding authors}}

\maketitle

%
%






\maketitle

\begin{abstract}
The nature of heterophilous graphs is significantly different from that of homophilous graphs, which causes difficulties in early graph neural network models and suggests aggregations beyond the 1-hop neighborhood. In this paper, we develop a new way to implement multi-scale extraction via constructing Haar-type graph framelets with desired properties of permutation equivariance, efficiency, and sparsity, for deep learning tasks on graphs. We further design a graph framelet neural network model PEGFAN (Permutation Equivariant Graph Framelet Augmented Network) based on our constructed graph framelets. The experiments are conducted on a synthetic dataset and 9 benchmark datasets to compare performance with other state-of-the-art models. The result shows that our model can achieve the best performance on certain datasets of heterophilous graphs (including the majority of heterophilous datasets with relatively larger sizes and denser connections) and competitive performance on the remaining. 
\end{abstract}

\begin{IEEEkeywords}
Graph neural networks (GNNs), Graph framelets/wavelets, Permutation equivariance, Heterophily.
\end{IEEEkeywords}

\section{Introduction}

\IEEEPARstart{G}{raphs} are ubiquitous data structures for a variety of real-life entities, such as traffic networks, social networks, citation networks, chemo- and bio-informatics networks, etc. With the abstraction via graphs, many real-world problems that are related to networks and communities can be cast into a unified framework and solved by exploiting its underlying rich and deep mathematical theory as well as tremendously efficient computational techniques.  In recent years, graph neural networks (GNNs) for graph learning such as node classification \cite{GCN}, link prediction \cite{ying2018graph}, and graph classification \cite{zhang2018end}, have demonstrated their powerful learning ability and achieved remarkable performance \cite{survey,wu2020comprehensive,zhou2020graph,zhang2022deep,xia2021graph}.   In the particular field of node classification,  many GNN models follow the \textit{homophily} assumption, that is, the majority of edges connect nodes from the same classes (e.g., researchers in a citation network tend to cite each other from the same area), yet graphs with \textit{heterophily}, that is, the majority of edges connect nodes from different classes \cite{1},  do exist in many real-world scenarios. A typical example is in a cyber network where a phishing attacker usually sends fraudulent messages to a large population of normal users (victims) in order to obtain sensitive information. We refer to  \cite{2,3} for the limitations of early GNNs on homophilous graphs and a recent survey paper \cite{1}  on GNNs for heterophilous graphs.

Heterophilous graphs differ from homophilous graphs not only {\em spatially} in terms of distribution beyond the 1-hop neighborhood but also {\em spectrally} with larger oscillation in terms of the frequency distribution of graph signals under the graph Laplacian. Such properties bring challenges to learning on heterophilous graphs and demand new GNNs the ability to extract intrinsic information in order to achieve high performance. To enhance the influence of nodes from the same classes that are outside of 1-hop neighborhoods, one common approach is based on the  {\em multi-hop aggregation} to leverage information of $k$-hop neighborhoods,  $k\geq 2$. 
Its effectiveness for heterophilous graphs is emphasized and theoretically verified in \cite{8}.
A common way to perform multi-hop aggregation is to utilize the powers of the adjacency matrix. Repeatedly applying Laplacian smoothing many times, prompted by using higher powers of adjacency matrix, can result in a convergence of vertex features within each connected component of the graph towards uninformative or identical values, a phenomenon referred to as over-smoothing \cite{2,rusch2023survey}. 
Moreover, they may lead to dense matrices and cause computation and storage burdens. To seek further improvement, it is thus desirable to consider an alternative spatial resolution of graphs other than $k$-hop neighborhood. To answer this question, we work on the theory of wavelet/framelet systems on graphs which brings a notion of {\em scale} on graphs and wavelets/framelets corresponding to such scales. In this paper, we introduce and integrate a dedicated graph framelet system so as to perform {\em multi-scale extraction} on graphs.

Actually, classical wavelets/framelets in the Euclidean domains, e.g., see \cite{9,han2017framelets}, are well-known examples of multi-scale representation, which have been extended to irregular domains such as graphs and manifolds under similar principles in recent years, e.g., see \cite{10,11,dong2017sparse,li2022convolutional}. Some graph wavelets/framelets systems are also proposed and applied in GNNs for node and graph classifications \cite{12,14,13}. When the graph is reordered, it is natural to expect the produced wavelets/framelets to be reordered in the same way for robust learning. However, most of the graph wavelets/framelets do not possess such a property of  {\em permutation equivariance}.  That is, up to certain permutations, the constructed graph wavelet/framelet systems should be the ``same''  regardless of the underlying node orderings.  The work on {Haar-type} graph wavelets/framelets \cite{H2,11,H3,12, xiao2021adaptive} are ``piecewise-constant'' functions on graphs that depend on a given tree with certain underlying node ordering.  If new orderings are given, though the underlying graph and graph data are the same, the newly resulting graph wavelets/framelets are no longer the same. Without the property of permutation equivariance, the network outputs could vary with respect to graph reordering and thus lead to instability of the GNNs.

In this paper, we provide a novel and general method to construct Haar-type graph framelets having the permutation equivariance property, which further implies the permutation equivariance of our graph framelet neural network (GFNN) model  PEGFAN ({\em Permutation Equivariant Graph Framelet Augmented Network}). Our Haar-type graph framelets are constructed spatially with respect to a hierarchical structure on the underlying graph.
Scales in such systems correspond to the levels in the hierarchical structure in which higher levels are associated with larger groups of nodes. Multi-scale extractions via such graph framelets are regarded as alternatives and supplements for the usual multi-hop aggregations. Moreover, we show that our graph framelets possess sparse representation property, which leads to the sparsity property of the orthogonal projection matrix (framelet matrix) formed by stacking those framelet vectors at certain scales. This is in contrast to the high powers of adjacency matrices and their non-sparse nature. Furthermore, we apply our graph framelets in the neural network architecture design by using the framelet matrices at different scales as well as the adjacency matrices to form multi-channel input and perform multi-scale extraction through attention and concatenation. The state-of-the-art node classification accuracies on several benchmark datasets validate the effectiveness of our GFNN model.

In summary, the contribution of this paper is as follows: 1) We propose a novel and general method to construct Haar-type graph framelets that have properties of permutation equivariance, sparse representation, efficient computation, and so on.  2) We apply our Haar-type graph framelet system to extract multi-scale information and integrate it into a graph neural network architecture.  3) We demonstrate the effectiveness of our model for node classification on synthetic and benchmark datasets via extensive comparisons with several state-of-the-art GNN models.

\section{Related Work}

\textbf{Node Classification on Heterophilous Graphs.} Early work on node classification includes \cite{ChebyGCN,GCN, GRAPHSAGE}, which are some of the earliest examples of spectral and spatial GNNs.
\cite{GEOMGCN} (GEOM-GCN) is the first article that aims at heterophilous graphs.  \cite{zhao2022neighborhood}  proposes a topology augmentation graph convolutional network (TA-GCN) framework under the guidance of an NCC (neighborhood class consistency) metric.  \cite{chen2023exploiting}  proposes a GNN model called conv-agnostic GNN  (CAGNN) to enhance the performance of GNNs on the heterophily datasets by learning the neighbor effect for each node.   \cite{wu2023beyond}   proposes a  relation-based frequency adaptive GNN (RFA-GNN) that can adaptively pick up signals of different frequencies in each corresponding relation space in the message-passing process. \cite{8} identifies a set of key designs that boost learning under heterophily. \cite{WRGAT} proposes to convert the input graph into a computation graph with proximity and structural information so as to counter the limit imposed by node-level assortativity (homophily). \cite{GPRGNN} introduces a new generalized PageRank to jointly optimize node feature and topological information extraction, regardless of homophily or heterophily. \cite{diffwire} proposes two novel fully differentiable and inductive rewiring layers to mitigate the problems of over-smoothing, over-squashing, and under-reaching on both homophilous and heterophilous graphs. \cite{guo2023homophily} adopts a homophily-oriented deep heterogeneous graph rewiring method to increase the meta-paths subgraph homophily ratio, which helps improve the performance of heterogeneous graph neural network (HGNN) on heterophilous graphs. In \cite{spinelli2021fairdrop}, a random-edge dropping mechanism for increasing heterophily of graphs is proposed, aiming at enhancing fairness in GNNs' predictions. We refer to \cite{1} for a comprehensive review of graph neural networks for graphs with heterophily.

\textbf{Multi-hop Aggregation in GNNs.} Papers of \cite{7,6,8,5} are GNNs that adopt hidden layer concatenation and multi-hop aggregation and involve the powers of adjacency matrices. Thus, they resemble each other in terms of neural network architecture. The difference is that \cite{7, 6}  mainly deal with homophilous datasets. On the other hand, with emphasis on the heterophilous setting, \cite{8} theoretically shows the importance of concatenation of aggregation beyond the 1-hop neighborhood, with an addition on the importance of ego- and neighbor-embedding separation. Such a non-local neighborhood aggregation is also emphasized in \cite{GRAPHSAGE,7}.  The current state-of-the-art model FSGNN, i.e., Feature Selection GNN \cite{5},  is different from the previous ones by, in our interpretation, viewing the semi-supervised setting as a supervised setting in which multi-hop aggregation is regarded as input of different feature channels from different hops and were not applied in the following layers. As a result, its network architecture basically consists of a {\em mix-hop} \cite{6} layer and fully-connected layers with attention weights for different channels being applied before the concatenation.  It is worth mentioning that a recent work \cite{LINKX}  on large-scale heterophilous node classification is very similar to \cite{5}, in which input channels were limited to the $0$-hop and the $1$-hop.

\textbf{Graph Wavelets/Framelets.} Papers of \cite{H1,H2, 10, 11,dong2017sparse,H3,12, xiao2021adaptive} are work of graph wavelets/framelets in which \cite{10,dong2017sparse} are spectral-type and the rest are Haar-type. A framelet system differs from the classical  (orthogonal) wavelet system by being a frame in a Hilbert space and offering redundant representation. The Haar-type wavelet system in \cite{H1} is defined for different nodes as centers. \cite{H2,11,H3,12,xiao2021adaptive} define graph wavelets/framelets under a given tree and they differ in the interpretation and generation of the tree. \cite{H2} applies to trees from graphs. In \cite{11}, the tree is represented as a filtration on $[0,1]$, and the wavelet system is equivalent to an orthogonal basis of tree polynomials. Similar to \cite{11},   the trees are further generalized to hierarchical partitions of $[0,1]^2$ in \cite{H3} and apply to directed graphs, and a Haar-type wavelet system for directed graphs is thus constructed. \cite{xiao2021adaptive} further generalizes the work in \cite{H3, han2019directional} by considering the constructions of Haar-type framelet systems on any compact set in $\mathbb{R}^d$ under a given hierarchical partition and adapt the construction of directed graph framelets to such cases. 

\begin{figure}[htpb]
	\centering
	\subfloat[Graph $\g_1$]{\includegraphics[scale=0.3]{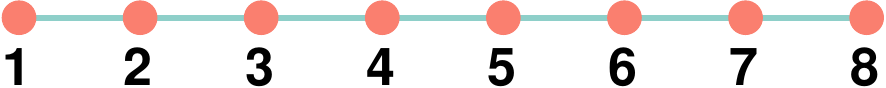}}\\
	\subfloat[Hierarchical partition $\p_4$]{\includegraphics[scale=0.3]{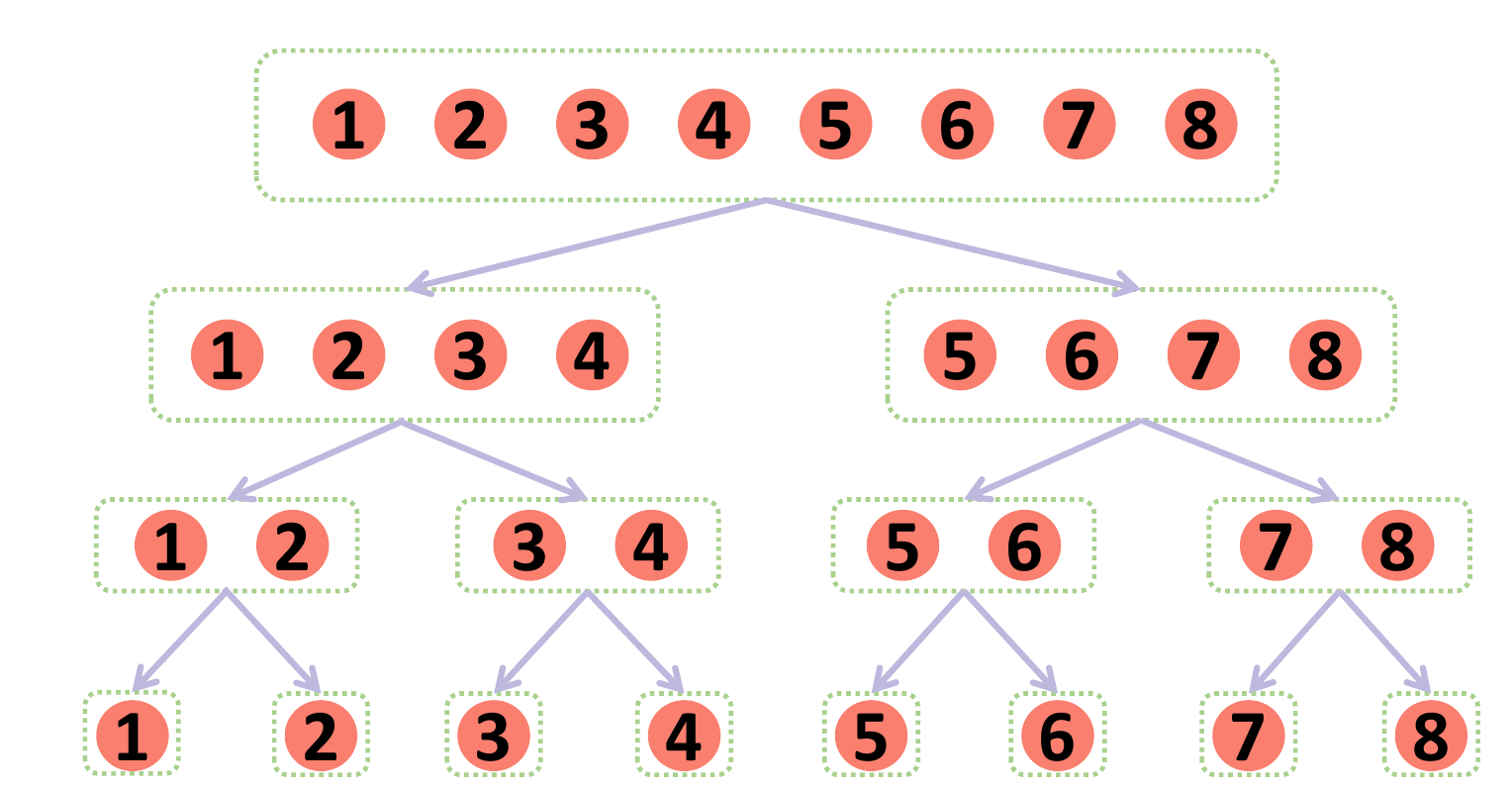}}\\
     \subfloat[Graph framelets]{\includegraphics[scale=0.3]{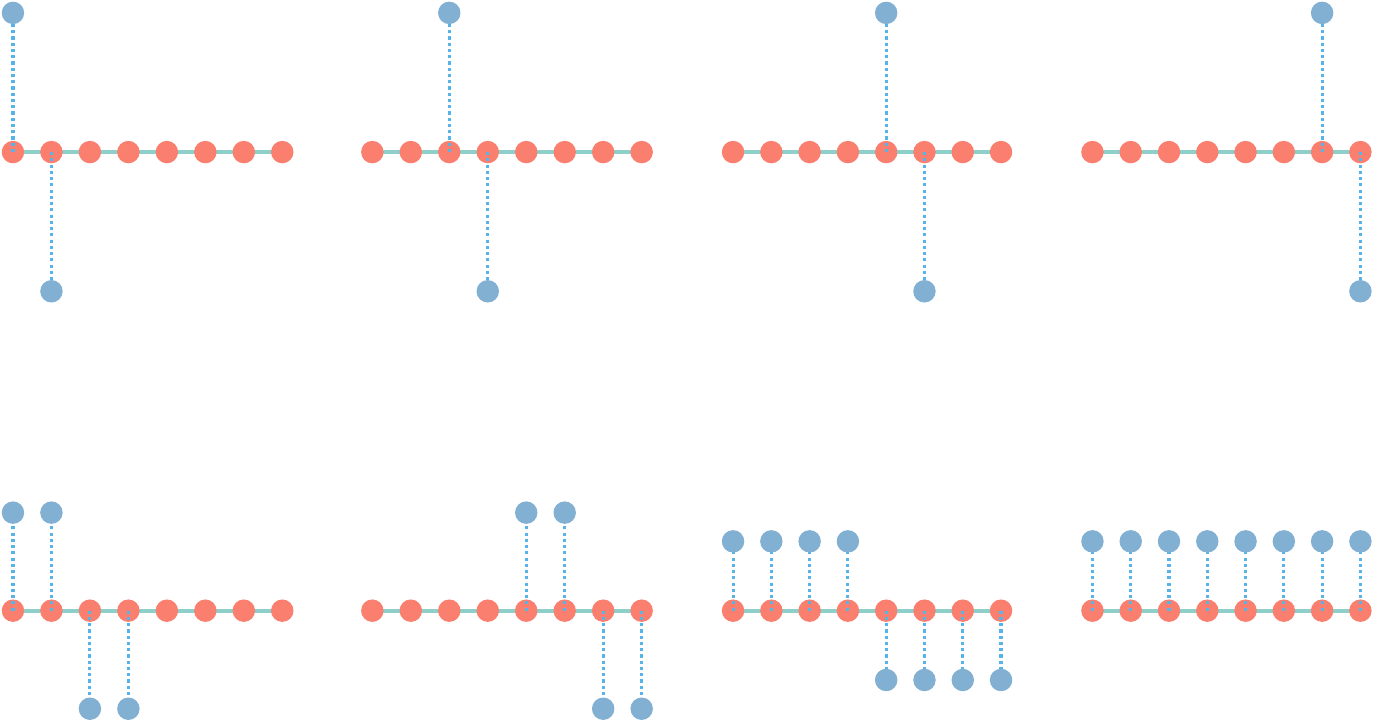}}
	    \caption{Graph framelets (bottom) w.r.t. $\g_1$ (top) and the hierarchical partition $\p_4
		$ (middle). The blue points are the values of graph framelets on each node. The height of blue points represents the value of graph framelets $\psi_{\bm \Lambda}$ (bottom 8). The blue points are above the corresponding nodes if values are positive, and below otherwise.}
	\label{fig:short-a}
\end{figure}

\section{Permutation Equivariant Graph Framelets}\label{GHF}

In this section, we develop Haar-type graph framelet systems, the binary Haar graph framelets, with properties of tightness, sparsity, efficiency, and permutation equivariance, which yield robustness and effective algorithms for the GFNN model PEGFAN to be introduced in \cref{sec:model}. All proofs of the main results in this paper are postponed to \cref{app:A}.

\subsection{Preliminaries}
Let $\g = (\v,\w)$ be a graph, where $\v = \{v_1,\dots,v_n\}$ is the vertex set containing $n$ vertices (or equivalently, we simply identify $\v = \{1,2,\dots,n \}$), and $\w \subset \v \times \v$ is the edge set of ordered pairs $(i,j)$.   The adjacency matrix $\b A : \v \times \v \rightarrow \RR$ of $\g$ is a matrix of size $n\times n$ such that its $(i,j)$-entry $a_{ij}$ is the weight on edge $(i,j)$ and $a_{ij}=0$ if $(i,j) \notin \w$. We consider only undirected graphs in this paper, i.e., $\b A^\top  = \b A$. We denote $\tilde{\b{A}}:= \b{D}^{-1/2}\b{A}\b{D}^{-1/2}$ with $\b D$ being the diagonal degree matrix of $\g$, whose diagonal elements defined as $d_{ii}= \sum_{j=1}^n a_{ij}$. A signal $\b f = [f_1,\dots,f_n]^\top$ on the graph is defined as $\b f : \v \rightarrow \RR$ with $\ell_2$ norm  $\|\b f\|^2 = \sum_{i=1}^{n}|f_i|^2 < \infty$. All such $\ell_2$ signals on $\g$ form a Hilbert space $L_2(\g)$ under the usual inner product. A collection $\{\b e_m: m \in[M]  \}\subset L_2(\g)$ is a {\em tight frame} of $L_2(\g)$ if
 $\b f = \sum_{m=1}^M \langle \b f,\b e_m \rangle \b e_m$ for all  $\b f \in L_2(\g)$, where $\langle\cdot,\cdot\rangle$ is the inner product and we denote $[M]:=\{1,\ldots,M\}$. We denote the $i$-th column vector and row vector of a matrix $\b M$, by $\b M_{:i}$ and $\b M_{i:}$, respectively.

For $K\ge 2$, we call a  sequence $\p_K: = \{\v_j: j = 1,\dots,K \}$   of sets as a {\em $K$-hierarchical clustering} of $\v$ if each $\v_j:=\{s_{\b\Lambda} \subset \v:  \dim(\b\Lambda)=j\}$ is a partition of $\v$, i.e., $\v=\cup_{\b\Lambda} s_{\b\Lambda}$, and $ \v_j$ is a refinement of $ \v_{j-1}$, where we use the index vector $ {\b{\b\Lambda}}=(\lambda_1,\ldots,\lambda_j)\in\NN^j$ to encode position, level $j$, and parent-children relationship, of the clusters $s_{\b\Lambda}$. \cref{fig:short-a} gives an example of a $K$-hierarchical clustering. Let us denote $\v_1 = \{s_{(1)}=\v=\{1,2,\dots,8\} \}$. Then according to parent-children relationship in \cref{fig:short-a}, we have $\v_2 = \{ s_{(1,1)}, s_{(1,2)} \}$ where $s_{(1,1)}=\{1,\dots,4\}$ and $s_{(1,2)}=\{5,\dots,8\}$. Similarly, we have $\v_3 = \{ s_{(1,1,1)},s_{(1,1,2)},s_{(1,2,1)},s_{(1,2,2)} \}$ where $s_{(1,1,1)} = \{1,2\}$, $s_{(1,1,2)} = \{3,4\}$, $s_{(1,2,1)} = \{5,6\}$ and $s_{(1,2,2)} = \{7,8\}$, and thus $s_{(1,1,1,1)} = \{1  \} $ and $s_{(1,1,1,2)} = \{2  \} $.
Here $\dim(\bm\Lambda)$ denotes the length of the index vector. If $ s_{  {\b\Lambda}}\in \v_{j}$ is a parent, then the index vectors of its children are appended with an integer, i.e. $(\b\Lambda,i)$, indicating its $i$-th child, and thus the child is denoted by $ s_{ ( {\b\Lambda},i)}\in \v_{j+1}$. Then we have the parent-children relationship $s_{ ( {\b\Lambda}, i)} \subset s_{  {\b\Lambda}}$. We denote the number of children of $s_{  {\b\Lambda}}$ by $L_{  {\b\Lambda}}$. Unless specified, we consider $K$-hierarchical clustering $\p_K$ with  $\v_K = \{\{1\},\ldots,\{n\}\}$ and  $\v_1=\{[n]\}$ being a singleton, i.e., $\p_K$ is a {\em tree}. 

In classical wavelet/framelet theory, an important concept is the multiresolution analysis (MRA). One of the most important ideas is to find a sequence of subspaces $\{ V_j \} \subset L_2(\RR)$ such that $V_j \subset V_{j+1}$ and $\cup_{j\in\ZZ} V_{j} = L_2(\RR)$. If there exists $\phi(t) \in V_0$ such that $\{ \phi(t-b)  \}_{b\in\ZZ}  $ forms an orthonormal basis of $V_0$ and $f(t) \in V_j$ if and only if $f(2t)\in V_{j+1}$, then we can find a mother wavelet $\psi(t)$ such that $\{2^{j/2}\psi({2^jt-b})\}_{j,b\in \ZZ} $ forms an orthonormal basis for  $L_2(\RR)$. For example, let $\phi(t) = \chi_{[0,1)}(t)$ and $\psi(t) = \phi(2t)-\phi(2t-1)$, then the resulting wavelet is the so-called Haar wavelet. However, the translation and dilation operators are not naturally defined for graph signals. Fortunately, if we look at the support of Haar wavelets, we can find that the union of the support of elements in $\mathfrak{W}_j:=\{2^{j/2}\psi({2^jt-b})\}_{b\in\NN}$ is equal to $\RR$ for a fixed $j$ and the collection of the support of elements in $\mathfrak{W}_{j+1}$ is a refinement of that of $\mathfrak{W}_{j+1}$. Hence, these supports actually form a hierarchical partition. Based on this observation, a natural way to define translations on the graph is to generalize hierarchical partitions to the graph, see \cite{xiao2021adaptive, li2022convolutional}. For example, when mapping each node in a graph to an interval on $[0,1]$, then based on the hierarchical partition on $[0,1]$, graph framelets can be constructed similarly as classical Haar wavelet \cite{xiao2021adaptive}. Below, we provide general conditions in \cref{thm1}  for constructing Haar graph framelets based on a $K$-hierarchical clustering.

\subsection{Main Construction}\label{main}


Given $\p_K$, we define the unit scaling vectors  $\b\phi_{  {\b\Lambda}}$ (similar to scaling functions  for  $V_j$'s  on a MRA) iteratively from $\dim(\b\Lambda)=K$ to $\dim(\b \Lambda)=1$. When $\dim ( {\b\Lambda}) = K$, each cluster (node)  $s_{  {\b\Lambda}}$  contains only one vertex in graph $\g$ (see \cref{fig:short-a}), thus we define $\b\phi_{  {\b\Lambda}} =\b I_{:i}$, where $i\in s_{  {\b\Lambda}} \subset \v$ and $\b I_{:i}$ is the $i$-th column of the identity matrix $\b I \in \RR^{n \times n}$.
When $\dim ( {\b\Lambda}) < K$, we define
\begin{align}\label{1}
\b\phi_{  {\b\Lambda}} := \sum_{\ell\in[L_{  {\b\Lambda}}]} p_{(  {\b\Lambda}, \ell)}  \b\phi_{(  {\b\Lambda}, \ell)},
\end{align}
where $\b p_{  {\b\Lambda}} = [p_{(  {\b\Lambda}, 1)}, \dots, p_{(  {\b\Lambda}, L_{ {{\b\Lambda}}})}]^\top  \in \RR^{L_{  {\b\Lambda}}}$ and $\| \b p_{  {\b\Lambda}} \|=1$. Obviously, $\b\phi_{  {\b\Lambda}}$ is with support $\supp \b\phi_{  {\b\Lambda}} = s_{  {\b\Lambda}}$ and $\|\b\phi_{  {\b\Lambda}} \|=1$.
For framelet vectors on the graph, we define $\b\psi_{(  {\b\Lambda}, m)}$, $m \in[ M_{{\b\Lambda}}]$ for some $M_{\b\Lambda}\in\NN$ by    
\begin{align}\label{2}
\b\psi_{(  {\b\Lambda}, m)} := \sum_{\ell\in[L_{  {\b\Lambda}}]} \left(\b B_{ {\b\Lambda} }\right)_{m,\ell}  \b\phi_{(  {\b\Lambda}, \ell)},
\end{align}
from some matrices $\b B_{  {\b\Lambda}}\in \RR^{M_{ {{\b\Lambda}}} \times L_{  {\b\Lambda}} }$. \cref{thm1} characterizes  when $\b\phi_{  {\b\Lambda}}$ and $\b\psi_{( {\b\Lambda}, m)}$ form a tight frame of $L_2(\g)$.

\begin{theorem}[General characterization]\label{thm1} Let $\p_K$ be a $K$-hierarchical clustering on a graph $\g$. Then the matrices $\b B_{ {{\b\Lambda}}}$ and vectors $\b p_{ {{\b\Lambda}}}$ satisfy $\b B_{ {{\b\Lambda}}} \b B_{ {{\b\Lambda}}}^\top \b B_{ {{\b\Lambda}}} = \b B_{ {{\b\Lambda}}}$, $ \b B_{ {{\b\Lambda}}} \b p_{ {{\b\Lambda}}} = \b 0 $, and $\rank(\b B_{ {{\b\Lambda}}}) = L_{ {{\b\Lambda}}}-1$ for all $\b\Lambda$ with $\dim( { {\b\Lambda}}) = j_0,\ldots,K$ if and only if for any $j_0\in[K]$,
 the collection
	$
	\mathcal{F}_{j_0}(\p_K):=\{ \b\phi_{  {\b\Lambda}} : \dim ( {\b\Lambda}) = j_0 \} \cup \{ \b\psi_{  {\b\Lambda}} : \dim ( {\b\Lambda}) = j\}_{j= j_0+1}^K
	$ defined by \cref{1,2}
	is a tight frame of $L_2(\g)$.
\end{theorem}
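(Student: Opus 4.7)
The plan is to reduce the global tight-frame property to a local, per-parent-cluster property, prove the equivalence of that local property with the three matrix conditions by pure linear algebra, and then chain the local conclusions along the levels of the hierarchy by induction on the level index.

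The first observation I would record is that the clusters at a fixed level $j$ partition $\v$ and $\supp\b\phi_{\b\Lambda}=s_{\b\Lambda}$, so the scaling vectors $\{\b\phi_{\b\Lambda}:\dim(\b\Lambda)=j\}$ have pairwise disjoint supports and unit norm and thus form an orthonormal set; at $j=K$ they are the standard basis, so $\mathcal{F}_K(\p_K)$ is automatically a tight frame. For any parent $\b\Lambda$ with $\dim(\b\Lambda)=j<K$, both $\b\phi_{\b\Lambda}$ and every $\b\psi_{(\b\Lambda,m)}$ lie in the $L_{\b\Lambda}$-dimensional subspace $W_{\b\Lambda}:=\Span\{\b\phi_{(\b\Lambda,\ell)}:\ell\in[L_{\b\Lambda}]\}$, and for different parents at the same level the subspaces $W_{\b\Lambda}$ are pairwise orthogonal because their underlying cluster supports are disjoint. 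This decouples the analysis across parents at every level.

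Next I would isolate the local lemma: inside $W_{\b\Lambda}$, the collection $\{\b\phi_{\b\Lambda}\}\cup\{\b\psi_{(\b\Lambda,m)}:m\in[M_{\b\Lambda}]\}$ is a tight frame of $W_{\b\Lambda}$ if and only if the three stated matrix conditions on $\b B_{\b\Lambda}$ and $\b p_{\b\Lambda}$ hold. Since $\{\b\phi_{(\b\Lambda,\ell)}\}_\ell$ is an orthonormal basis of $W_{\b\Lambda}$, reading off coordinates in this basis turns the tight-frame identity into $\b p_{\b\Lambda}\b p_{\b\Lambda}^\top+\b B_{\b\Lambda}^\top\b B_{\b\Lambda}=\b I_{L_{\b\Lambda}}$, equivalently $\b B_{\b\Lambda}^\top\b B_{\b\Lambda}=\b I - \b p_{\b\Lambda}\b p_{\b\Lambda}^\top$. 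Because $\|\b p_{\b\Lambda}\|=1$, the right-hand side is the orthogonal projector onto $(\b p_{\b\Lambda})^\perp$. The forward direction is a direct calculation: multiply on the right by $\b p_{\b\Lambda}$ to obtain $\b B_{\b\Lambda}\b p_{\b\Lambda}=\b 0$, read off $\rank(\b B_{\b\Lambda})=L_{\b\Lambda}-1$, and compute $\b B_{\b\Lambda}\b B_{\b\Lambda}^\top\b B_{\b\Lambda}=\b B_{\b\Lambda}(\b I - \b p_{\b\Lambda}\b p_{\b\Lambda}^\top)=\b B_{\b\Lambda}$. For the converse, $\b B_{\b\Lambda}^\top\b B_{\b\Lambda}$ is automatically symmetric, becomes idempotent by left-multiplying $\b B_{\b\Lambda}\b B_{\b\Lambda}^\top\b B_{\b\Lambda}=\b B_{\b\Lambda}$ by $\b B_{\b\Lambda}^\top$, has rank $L_{\b\Lambda}-1$, and annihilates $\b p_{\b\Lambda}$; a symmetric projector of rank $L_{\b\Lambda}-1$ with $\b p_{\b\Lambda}$ in its kernel is uniquely $\b I - \b p_{\b\Lambda}\b p_{\b\Lambda}^\top$.

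Finally, I would run a downward induction on $j$ from $K$ to $j_0$. Assuming $\mathcal{F}_{j+1}(\p_K)$ is tight and the three conditions hold for every parent $\b\Lambda$ at level $j$, the orthogonal decomposition $L_2(\g)=\bigoplus_{\b\Lambda:\dim(\b\Lambda)=j}W_{\b\Lambda}$ lets me replace, inside each $W_{\b\Lambda}$, the orthonormal set $\{\b\phi_{(\b\Lambda,\ell)}\}_\ell$ by the local tight frame $\{\b\phi_{\b\Lambda}\}\cup\{\b\psi_{(\b\Lambda,m)}\}_m$ supplied by the local lemma, and summing the resulting Parseval identities over the direct sum yields tightness of $\mathcal{F}_j(\p_K)$. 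Conversely, given tightness of $\mathcal{F}_{j_0}(\p_K)$, I would test the Parseval identity on signals supported inside a single cluster $s_{\b\Lambda}$ with $\dim(\b\Lambda)=j\geq j_0$: such signals lie in $W_{\b\Lambda}$ and are orthogonal to every framelet vector not indexed by a descendant of $\b\Lambda$, so the global identity restricts to a tight-frame statement inside $W_{\b\Lambda}$, and the local lemma then forces the three matrix conditions at that $\b\Lambda$. The main obstacle is the uniqueness step in the local lemma — turning the three separate algebraic conditions back into the single projector identity $\b B_{\b\Lambda}^\top\b B_{\b\Lambda}=\b I-\b p_{\b\Lambda}\b p_{\b\Lambda}^\top$ — while the passage across levels is essentially bookkeeping enabled by the disjoint-support property $\supp\b\phi_{\b\Lambda},\supp\b\psi_{\b\Lambda}\subset s_{\b\Lambda}$, which cleanly decouples the tight-frame identity across parents at each level.
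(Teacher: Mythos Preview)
Your overall architecture matches the paper's proof exactly: both arguments rest on the disjoint-support observation to make $\{\b\phi_{\b\Lambda}:\dim(\b\Lambda)=j\}$ an orthonormal basis of $V_j$, a local per-parent lemma (which the paper outsources to Lemma~1 of \cite{li2022convolutional} while you prove it inline), and a level-by-level chaining of tight frames. Your inline proof of the local lemma is in fact more explicit than the paper's.

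There is, however, a concrete gap in your converse direction. You claim that a signal $\b f$ supported in $s_{\b\Lambda}$ is ``orthogonal to every framelet vector not indexed by a descendant of $\b\Lambda$.'' This is false for the \emph{ancestors} of $\b\Lambda$: if $\b\Lambda'$ is an ancestor (including the scaling vector at level $j_0$ and every $\b\psi_{(\b\Lambda'',m)}$ with $\dim(\b\Lambda'')<\dim(\b\Lambda)$ on the path to the root), then $\supp\b\psi_{(\b\Lambda'',m)}\supset s_{\b\Lambda}$ and the inner product with $\b f$ need not vanish. Concretely, restricted to $s_{\b\Lambda}$ each such ancestor vector is a scalar multiple of $\b\phi_{\b\Lambda}$, so what your restriction actually yields is
\[
\alpha_{\b\Lambda}\,\b p_{\b\Lambda}\b p_{\b\Lambda}^\top+\b B_{\b\Lambda}^\top\b B_{\b\Lambda}=\b I
\]
for some constant $\alpha_{\b\Lambda}$ coming from the ancestor contributions, not the desired identity with $\alpha_{\b\Lambda}=1$. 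You therefore still owe an argument that $\alpha_{\b\Lambda}=1$ is forced; this requires either (a) also exploiting the \emph{off-diagonal} block of the operator identity $\sum_u \b u\b u^\top=\b I$ between sibling subspaces $W_{\b\Lambda}$ and $W_{\b\Lambda'}$, or (b) running the converse by induction on the level so that the conditions at coarser levels are already known, which then makes the ancestor contributions sum to exactly $\|\b\phi_{\b\Lambda}\|^2=1$. The paper's proof handles this by carrying the iff through the levels via the cited lemma rather than by a single restriction argument; your route is fixable, but not as written.
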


We remark that \cref{thm1} provides a more general sufficient and necessary condition than that in \cite{li2022convolutional}, for all graph framelets having the form \eqref{1} and \eqref{2} to be a tight frame. When we use the  Haar graph framelets to extract frequency features of graph signals, general graph wavelets/framelets (\eqref{1} and \eqref{2}) can be viewed as multi-scale representation systems in which the notion of `scale' is different from the usual $k$-hop neighborhood in graphs and serve as an alternative to capture long-range information. 

Besides, the given $\p_K$ in the proposed construction is not specified. The advantage of the generality of this definition is that there is no constraint on how the $\p_K$ is generated: one can use solely the edges or combine the edges and node features to generate $\p_K$, etc. Thus this provides great potential in theories and applications. As well shown in experiments, clustering graph nodes based only on adjacency matrices is capable of providing nice graph framelets that help improve the learning abilities of neural networks on node classification tasks.

The following example shows a close relationship between our framelet systems and the traditional Haar graph basis.

\begin{exam}[Path graph and Haar basis]\label{g1}
	Given a path graph $\g_1$ with $8$ nodes $\v = \{1,2,\dots,8 \}$. If we choose hierarchical clustering $\p_4=\{\v_1,\v_2,\v_3,\v_4\}$ with
	$\v_1=\{ s_{(1)} \}$, $\v_2=\{
	s_{(1,1)}, s_{(1,2)} \}$, $
	 \v_3=\{
	s_{(1,1,1)}, s_{(1,1,2)},s_{(1,2,1)}, s_{(1,2,2)} \}$, $\v_4=\{\ab 
	s_{(1,1,1,1)}, \ab s_{(1,1,1,2)}, \ab s_{(1,1,2,1)}, \ab s_{(1,1,2,2)}, 
	\ab s_{(1,2,1,1)}, \ab s_{(1,2,1,2)}, \ab s_{(1,2,2,1)}, \ab s_{(1,2,2,2)} \}$,
	and $\b p_{ {{\b\Lambda}}} = [\f{1}{\sqrt{2}},\f{1}{\sqrt{2}}]^\top$ and $\b B_{ {{\b\Lambda}}} = [\f{1}{\sqrt{2}},-\f{1}{\sqrt{2}}]$ for all $ {{\b\Lambda}}$ (note that each parent has exactly two children $L_{ {{\b\Lambda}}} = 2$), then the graph framelet system $\mathcal{F}_{j_0}(\p_K)$ with $K=4$ as in \cref{thm1}  is a Haar basis  for any $j_0\in[K]$. See \cref{fig:short-a} for illustration.
\end{exam}

Based on the general conditions in \cref{thm1}, we further investigate the specific structure of $\b B_{\b\Lambda}$.  We give the following proposition that completely characterizes the structure of matrices $\b B_{\b\Lambda}$ in \cref{thm1}.

\begin{proposition}\label{B}
Let $\b p$ be a unit vector of length $L\ge1$, that is, $\|\b p\|=1$. 
Assume that $\b B\in  \RR^{M \times L}$ with $M\ge L-1$ is a matrix such that $\b B \b  p = 0$ and $\rank(\b B) = L-1$. Then  $\b B \b B^\top \b B = c \b B$ for some constant $c$ if and only if $\b B^\top \b B = c (\b I-\b p \b p^\top)$. In particular, if $c\neq 0$, then $\b P := [\b p, \f{1}{\sqrt{c}}\b B^\top]$ satisfies $\b P\b P^\top = \b I$. 
\end{proposition}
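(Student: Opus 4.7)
The plan is to reduce the cubic matrix identity to a simple quadratic spectral condition on the Gram matrix $\b Q := \b B^\top \b B \in \RR^{L \times L}$. Two structural facts drive the argument. First, $\b Q$ is symmetric positive semidefinite. Second, $\ker(\b Q) = \ker(\b B)$, and the hypotheses $\b B \b p = \b 0$ together with $\rank(\b B) = L-1$ pin this kernel down to $\Span\{\b p\}$. Therefore $\b Q$ admits a complete orthonormal eigenbasis $\{\b v_1,\dots,\b v_{L-1},\b p\}$ in which $\b p$ carries eigenvalue $0$ and the remaining eigenvalues $\lambda_1,\dots,\lambda_{L-1}$ are strictly positive.

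For the forward direction I would left-multiply the assumed identity $\b B \b B^\top \b B = c\b B$ by $\b B^\top$ to obtain $\b Q^2 = c\b Q$. Applying the spectral decomposition, this forces $\lambda_i^2 = c\lambda_i$ for each eigenvalue of $\b Q$, so every positive $\lambda_i$ must equal $c$. Writing $\b Q = c\sum_{i=1}^{L-1}\b v_i\b v_i^\top$ and using the resolution of the identity $\sum_{i=1}^{L-1}\b v_i\b v_i^\top + \b p\b p^\top = \b I$ then gives $\b B^\top \b B = c(\b I - \b p\b p^\top)$, as claimed.

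The reverse direction is immediate: left-multiplying $\b B^\top \b B = c(\b I - \b p\b p^\top)$ by $\b B$ produces $\b B \b B^\top \b B = c\b B - c(\b B \b p)\b p^\top = c\b B$, since $\b B \b p = \b 0$. For the final statement, assuming $c \neq 0$ and using the established characterization of $\b B^\top \b B$, a direct block computation yields $\b P \b P^\top = \b p\b p^\top + \f{1}{c}\b B^\top \b B = \b p\b p^\top + (\b I - \b p\b p^\top) = \b I$.

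There is no real obstacle; the one point that needs to be handled cleanly is the identification $\ker(\b B^\top \b B) = \ker(\b B)$ coupled with the rank count, which is what makes $\b p$ the unique (up to sign) null eigendirection of $\b Q$ and lets the eigenvalues alone reconstruct $\b Q$. The degenerate case $c = 0$ forces $\b Q = 0$ and hence $\b B = 0$, which is consistent with $\rank(\b B) = 0$ only when $L = 1$; there the statement is vacuous, so no separate case analysis is needed.
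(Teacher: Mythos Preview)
Your proof is correct. Both directions and the final statement are handled soundly; in particular, the identification $\ker(\b B^\top \b B)=\ker(\b B)=\Span\{\b p\}$ is exactly the right ingredient, and the reduction of $\b B\b B^\top\b B=c\b B$ to $\b Q^2=c\b Q$ via left-multiplication by $\b B^\top$ is valid.

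Your route differs from the paper's in the forward direction. You pass to the Gram matrix $\b Q=\b B^\top\b B$, diagonalise it, and read off that every nonzero eigenvalue must equal $c$; this yields $\b Q=c(\b I-\b p\b p^\top)$ via the resolution of the identity. The paper instead argues without any spectral decomposition: from $\b B(\b B^\top\b B)=c\b B=c\b B(\b I-\b p\b p^\top)$ and $\b p^\top(\b I-\b p\b p^\top)=\b 0$ it stacks $\b p^\top$ on top of $\b B$ to obtain $\bigl[\begin{smallmatrix}\b p^\top\\\b B\end{smallmatrix}\bigr]\bigl(\b B^\top\b B-c(\b I-\b p\b p^\top)\bigr)=\b 0$, and then invokes the full column rank of $[\b p,\b B^\top]$ to conclude. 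The paper's argument is marginally more elementary (no eigenbasis needed), while yours makes the underlying spectral structure of $\b B^\top\b B$ explicit and explains transparently why the identity $\b Q^2=c\b Q$ pins down $\b Q$. Both are short and clean; neither has a real advantage in generality.
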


\cref{B} shows that $\b B_{\b\Lambda}$ is from the (matrix) splitting of a rank $L-1$ matrix $\b I - \b p_{{\b\Lambda}}\b p_{\b\Lambda}^\top$. Notice that the role of elements in $\b p_{\b\Lambda}$ in \cref{1} is to give weights to each cluster $s_{(\b \Lambda, \ell)}$. One typical scenario is that each child cluster is of equal importance, which means that the vector $\b p_{\b \Lambda}$ is a vector with all equal elements. On the other hand, it could be too involved to use matrix splitting techniques \cite{han2010matrix,zhuang2012matrix,mo2012matrix,han2013algorithms} for obtaining the matrix  $\b B_{\b\Lambda}$. We next show that we can obtain matrices $\b B_{\b\Lambda}$ by simply permuting a fixed vector $\b w$ such that each of its elements appears with equal chance. Under this hypothesis of equal importance and equal chance, in the following result, we introduce a binary Haar graph framelet system by a careful design of the matrices $\b B_{ {{\b\Lambda}}}$ and $\b p_{ {{\b\Lambda}}}$. The word \emph{binary} here is chosen since each nonzero coefficient of high-frequency framelets in \cref{2} only takes from $\{1,-1\}$ (without normalization). We show that such graph framelet systems $\mathcal{F}_{j_0}(\p_K)$ have many desirable properties including permutation equivariance.

For each pair $(\ell_1,\ell_2)$ with $1\le \ell_1<\ell_2\le L_{\b\Lambda}$, define a vector $\b w_{ {{\b\Lambda}}}^m$ of size $L_{\b \Lambda}\times 1$ by
\begin{align}\label{Bw:w}
(\b w_{\b\Lambda}^m)_{\tau}=
\begin{cases}
\frac{1}{\sqrt{L_{\b\Lambda}}} & \tau = \ell_1;\\
\frac{-1}{\sqrt{L_{\b\Lambda}}} & \tau= \ell_2;\\
0 & \mbox{otherwise},
\end{cases}	
\end{align}		
where  $m:=m(\ell_1,\ell_2,L_{\b\Lambda}):=\frac{(2L_{\b\Lambda}-\ell_1)(\ell_1-1)}{2}+\ell_2-\ell_1$  is ranging from $1$ to $M_{\b\Lambda}:=\frac{L_{\b \Lambda}(L_{\b \Lambda}-1)}{2}$ for all possible pairs $(\ell_1,\ell_2)$ with $1\le \ell_1<\ell_2\le L_{\b\Lambda}$. Note that  $\b w_{ {{\b\Lambda}}}^m$ has only two non-zero entries locating at the $\ell_1$-th and $\ell_2$-th position, respectively. Such a $\b w_{\b\Lambda}^m$ will be used as the $m$-th row of the matrix $\b B_{\b\Lambda}$.

\begin{corollary}[Binary Haar graph framelets]\label{cor}
Let $\p_K$ be a $K$-hierarchical clustering on a graph $\g$.
	Let  $\b p_{ {{\b\Lambda}}} = \f{1}{\sqrt{L_{ {{\b\Lambda}}}}}\bm{1}$ be a constant vector of size $L_{\b \Lambda}\times 1$ and $\b B_{\b\Lambda}:=[\b w_{\b\Lambda}^1,\ldots,\b w_{\b\Lambda}^{M_{\b\Lambda}}]^\top$ with $\b w_{ {{\b\Lambda}}}^m$ being given as in \cref{Bw:w}. Define $\mathcal{F}_{j_0}(\p_K)$ as in \cref{thm1}. Then $\mathcal{F}_{j_0}(\p_K)$ is a tight frame for $L_2(\g)$ for any $j_0\in[K]$.
\end{corollary}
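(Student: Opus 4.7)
The plan is to reduce the Corollary to \cref{thm1} by verifying, for every node $\b\Lambda$ of the tree, the three algebraic conditions
$\b B_{\b\Lambda}\b B_{\b\Lambda}^\top \b B_{\b\Lambda}=\b B_{\b\Lambda}$, $\b B_{\b\Lambda}\b p_{\b\Lambda}=\b 0$, and $\rank(\b B_{\b\Lambda})=L_{\b\Lambda}-1$, with the specific $\b p_{\b\Lambda}$ and $\b B_{\b\Lambda}$ prescribed in the statement. The key move is to invoke \propref{B} with $c=1$: it suffices to show that $\b B_{\b\Lambda}^\top\b B_{\b\Lambda}=\b I-\b p_{\b\Lambda}\b p_{\b\Lambda}^\top$, after which the first and third conditions fall out automatically (the cubic identity from \propref{B}, and $\rank(\b B_{\b\Lambda})=\rank(\b B_{\b\Lambda}^\top\b B_{\b\Lambda})=L_{\b\Lambda}-1$ since $\b I-\b p_{\b\Lambda}\b p_{\b\Lambda}^\top$ is an orthogonal projection of rank $L_{\b\Lambda}-1$).

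First I would dispose of the condition $\b B_{\b\Lambda}\b p_{\b\Lambda}=\b 0$: each row $(\b w_{\b\Lambda}^m)^\top$ of $\b B_{\b\Lambda}$ has exactly one entry $\tfrac{1}{\sqrt{L_{\b\Lambda}}}$ and one entry $-\tfrac{1}{\sqrt{L_{\b\Lambda}}}$, and $\b p_{\b\Lambda}=\tfrac{1}{\sqrt{L_{\b\Lambda}}}\bm 1$ is constant, so each row's inner product with $\b p_{\b\Lambda}$ is zero.

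The main computation is
\[
\b B_{\b\Lambda}^\top\b B_{\b\Lambda}\;=\;\sum_{1\le \ell_1<\ell_2\le L_{\b\Lambda}} \b w_{\b\Lambda}^{m(\ell_1,\ell_2,L_{\b\Lambda})}\bigl(\b w_{\b\Lambda}^{m(\ell_1,\ell_2,L_{\b\Lambda})}\bigr)^\top.
\]
I would evaluate this entrywise. Each outer product contributes $\tfrac{1}{L_{\b\Lambda}}$ on the two diagonal entries $(\ell_1,\ell_1),(\ell_2,\ell_2)$ and $-\tfrac{1}{L_{\b\Lambda}}$ on the two off-diagonal entries $(\ell_1,\ell_2),(\ell_2,\ell_1)$. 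Summing over all ordered pairs, a given diagonal index $\tau$ is hit by exactly $L_{\b\Lambda}-1$ pairs, giving diagonal value $\tfrac{L_{\b\Lambda}-1}{L_{\b\Lambda}}$, while each off-diagonal entry $(\tau_1,\tau_2)$ with $\tau_1\ne\tau_2$ is hit by exactly one pair, giving value $-\tfrac{1}{L_{\b\Lambda}}$. Therefore
\[
\b B_{\b\Lambda}^\top\b B_{\b\Lambda}\;=\;\b I-\tfrac{1}{L_{\b\Lambda}}\bm 1\bm 1^\top\;=\;\b I-\b p_{\b\Lambda}\b p_{\b\Lambda}^\top,
\]
which is exactly the identity required by \propref{B} with $c=1$.

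Once this identity is in hand, \propref{B} yields $\b B_{\b\Lambda}\b B_{\b\Lambda}^\top\b B_{\b\Lambda}=\b B_{\b\Lambda}$ and the rank equality, and \thmref{thm1} then gives the tight frame property of $\mathcal{F}_{j_0}(\p_K)$ for every $j_0\in[K]$. The only subtlety I anticipate is the degenerate case $L_{\b\Lambda}=1$ (a parent with a single child), where $M_{\b\Lambda}=\tfrac{L_{\b\Lambda}(L_{\b\Lambda}-1)}{2}=0$, so $\b B_{\b\Lambda}$ is the empty $0\times 1$ matrix; the three conditions hold vacuously, and the inductive step through this level of the tree is trivial. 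The main (and only nontrivial) obstacle is the combinatorial bookkeeping in the outer-product sum, which the counting argument above handles cleanly.
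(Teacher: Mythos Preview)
Your proposal is correct and follows essentially the same route as the paper: both establish $\b B_{\b\Lambda}^\top\b B_{\b\Lambda}=\b I-\b p_{\b\Lambda}\b p_{\b\Lambda}^\top$ and then read off the three conditions needed for \cref{thm1}. The only cosmetic difference is that the paper obtains this identity by stacking $\b A_{\b\Lambda}:=[\b p_{\b\Lambda},\b B_{\b\Lambda}^\top]^\top$ and checking column orthonormality of $\b A_{\b\Lambda}$ (so $\b A_{\b\Lambda}^\top\b A_{\b\Lambda}=\b I$, hence $\b B_{\b\Lambda}^\top\b B_{\b\Lambda}=\b I-\b p_{\b\Lambda}\b p_{\b\Lambda}^\top$), whereas you compute $\b B_{\b\Lambda}^\top\b B_{\b\Lambda}$ directly as the sum of rank-one outer products $\sum_m \b w_{\b\Lambda}^m(\b w_{\b\Lambda}^m)^\top$; the counting is identical, and the paper then derives the cubic identity by the one-line computation $\b B_{\b\Lambda}(\b I-\b p_{\b\Lambda}\b p_{\b\Lambda}^\top)=\b B_{\b\Lambda}$ rather than citing \propref{B}.
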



\begin{remark}\label{rm2}
In fact, the framelets obtained in Examples~\ref{g1} 
belongs to binary  Haar graph framelets (See \cref{fig:short-a}  for illustration). The matrix $\b B_{\b\Lambda}$ is formed by permuting $1,-1$ of the specific type of vectors $\b w = [1,-1,0,\ldots,0]$ to all possible positions. In fact, more general types of vectors $\b w$ can be served to form the matrix $\b B_{\b\Lambda}$ through permutations. 
\end{remark}

We next focus on the sparsity, efficiency, and permutation equivariance of the binary Haar graph framelets constructed in \cref{cor}.
\subsection{Sparsity}

Notice that if each row of the matrix $\b B_{ {{\b\Lambda}}}$ is sparse, then the produced $\psi_{(  {\b\Lambda}, m)}$ is also sparse. For the binary Haar graph framelets,  each row of $\b B_{ {{\b\Lambda}}}$  only contains two nonzero values. {Let $L_j := \max_{\dim( {{\b\Lambda}}) = j} L_{ {{\b\Lambda}}}$, then it is easy to see that the number $\left \| \b \psi_{(  {\b\Lambda}, m)} \right \|_0$ of non-zero entries of $\b \psi_{(  {\b\Lambda}, m)} $ satisfies $\left \| \b \psi_{(  {\b\Lambda}, m)} \right \|_0 \leq 2L_{j+1}$, for all $\dim( {{\b\Lambda}})=j$. When the hierarchical clustering  is balanced and $\dim( {{\b\Lambda}})$ is large, high-level framelets $\b \psi_{(  {\b\Lambda}, m)}$ are well-localized and thus sparse.

Besides the sparsity of the framelets, we also want to know when the framelet coefficients of a signal are sparse, which is the desired property of sparse representation of framelets. Different coefficients represent different scales. The sparse representation property plays an important role in feature extraction and representation for classification tasks. In node classification, piecewise constant signals are of great importance, e.g., in one-hot label encoding \cite{brownlee2017one}. Hence, it is valuable to study the framelet coefficients of the piecewise constant signals. Let $\mathcal{F}_{j_0}(\p_K):=\{ \b\phi_{  {\b\Lambda}}, \dim ( {\b\Lambda}) = j_0 \} \cup \{ \b\psi_{  {\b\Lambda}}, \dim ( {\b\Lambda}) = j  \}_{j=j_0+1}^K=:\{\b u_i\}_{i=1}^{M_{\g}}$ be a binary Haar graph framelet system with $M_{\g}$ elements  and define $\hat{\b f} \in \RR^{M_{\g}}$ to be the {\em framelet coefficient vector} with its $i$-th element $(\hat{\b f})_i := \langle \b f, \b u_i \rangle$ for a signal $\b f$. In what follows, we  denote $\b F:=[\b u_1,\ldots,\b u_{M_{\g}}]$ to be the matrix representation of the graph framelet system $\mathcal{F}_{j_0}(\p_K)$. Then, $\hat{\b f} = \b F^\top \b f$. We have the following result regarding the sparsity of $\hat{\b f}$.

\begin{theorem}[Binary Haar graph framelet transform preserving sparsity]\label{sparsity} Let $\mathcal{F}_{j_0}(\p_K)$ be a binary Haar graph framelet system defined as in \cref{cor}. Assume that $\max_{\dim( {{\b\Lambda}}) >0}L_{ {{\b\Lambda}}} \leq h$.  Then for a signal $\b f \in \RR^n$, the framelet coefficient vector $\hat {\b f}$ satisfies $\|\hat{\b f} \|_0 \leq(K-1)(h-1) \|\b f\|_0$.
\end{theorem}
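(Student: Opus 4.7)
The plan is to reduce the global sparsity bound on $\hat{\b f} = \b F^\top \b f$ to a pointwise counting argument. Writing $(\hat{\b f})_i = \langle \b f, \b u_i \rangle = \sum_{j \in \supp \b f} f_j (\b u_i)_j$, a necessary condition for $(\hat{\b f})_i \neq 0$ is that $(\b u_i)_j \neq 0$ for at least one $j \in \supp \b f$. A union bound then yields
\[
\|\hat{\b f}\|_0 \;\le\; \sum_{j \in \supp \b f} N_j, \qquad N_j := \bigl|\{ i : (\b u_i)_j \neq 0 \}\bigr|,
\]
so it suffices to bound $N_j \le (K-1)(h-1)$ uniformly in $j \in \v$.

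For fixed $j$, I would walk through the hierarchy $\p_K$ level by level. At each level $\ell \in \{j_0+1, \dots, K\}$, the partition property of $\v_{\ell-1}$ singles out a unique parent index $\b\Lambda$ with $\dim(\b\Lambda) = \ell - 1$ and $j \in s_{\b\Lambda}$, together with a unique child index $\ell^* \in [L_{\b\Lambda}]$ with $j \in s_{(\b\Lambda, \ell^*)}$. A short induction from the leaves upwards, using that $\b p_{\b\Lambda} = \frac{1}{\sqrt{L_{\b\Lambda}}} \bm{1}$ has strictly positive entries, shows that $(\b\phi_{\b\Lambda'})_j > 0$ whenever $j \in s_{\b\Lambda'}$, so in the recursion \cref{2} only the term with child index $\ell^*$ contributes at coordinate $j$. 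Hence
\[
(\b\psi_{(\b\Lambda, m)})_j \neq 0 \iff (\b B_{\b\Lambda})_{m, \ell^*} \neq 0.
\]
At this point the binary structure \cref{Bw:w} becomes crucial: the rows of $\b B_{\b\Lambda}$ are indexed by unordered pairs $(\ell_1, \ell_2)$ with $\ell_1 < \ell_2 \le L_{\b\Lambda}$, and column $\ell^*$ is non-zero precisely in those rows whose pair contains $\ell^*$, i.e., in $L_{\b\Lambda} - 1 \le h - 1$ rows.

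Summing over the $K - j_0 \le K - 1$ framelet levels then bounds the framelet contribution to $N_j$ by $(K-1)(h-1)$, with the lone non-zero contribution from the coarsest scaling vector $\b\phi_{\b\Lambda^{(j_0)}}$ absorbed under the standing convention $j_0 = 1$. Substituting $N_j \le (K-1)(h-1)$ into the union bound yields $\|\hat{\b f}\|_0 \le (K-1)(h-1)\|\b f\|_0$ as claimed. The main obstacle is precisely this row-sparsity count on $\b B_{\b\Lambda}$: neither \cref{thm1} nor \cref{B} forces the rows of $\b B_{\b\Lambda}$ to be sparse on their own, so the conclusion genuinely relies on the particular binary construction in \cref{Bw:w}; a generic admissible matrix splitting of $\b I - \b p_{\b\Lambda} \b p_{\b\Lambda}^\top$ would typically have dense rows, and the same argument would then only yield $N_j \le (K-1) L_{\b\Lambda}$ or worse, losing the key $(h-1)$ factor.
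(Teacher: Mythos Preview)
Your proposal is correct and follows essentially the same approach as the paper: both reduce to bounding $\|\b F^\top \b I_{:j}\|_0$ (your $N_j$) for each $j$ in the support, then invoke subadditivity of $\|\cdot\|_0$. Your argument is in fact more explicit than the paper's, which simply asserts that ``at most $h-1$ framelets $\b\psi_{(\b\Lambda,m)}$'' are nonzero at a given node without spelling out the column-sparsity count on $\b B_{\b\Lambda}$ that you derive from \cref{Bw:w}; you also correctly flag the single scaling-vector contribution at level $j_0$, which the paper's proof silently omits from its count.
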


\begin{remark}\label{rem3}
If for all $ {{\b\Lambda}}$, we have $L_{ {{\b\Lambda}}}=h$ for some integer $h\ge2$, then $K = O( \log_h n)$ and hence $\|\hat{\b f} \|_0 = \|\b f\|_0 \cdot O( h \log_h n )$, which shows that our binary  Haar graph transform preserves sparsity for sparse signals. In fact,  the total number $M_\g$ of elements in $\mathcal{F}_{j_0}(\p_K)$ in this case is of order $O(nh)$. When $\|f\|_0\ll n$, we see that $\|\hat {\b f}\|_0\ll   O(nh)=M_\g$.   \cref{sparsity} can be extended to other type of matrices $\b B_{ {{\b\Lambda}}}$ that is row-wise sparse.

\end{remark}

	

\begin{figure}[t]
	\centering
	\subfloat[$s_{(1,1,1)}$ of $\p_4$]{\includegraphics[width=3cm]{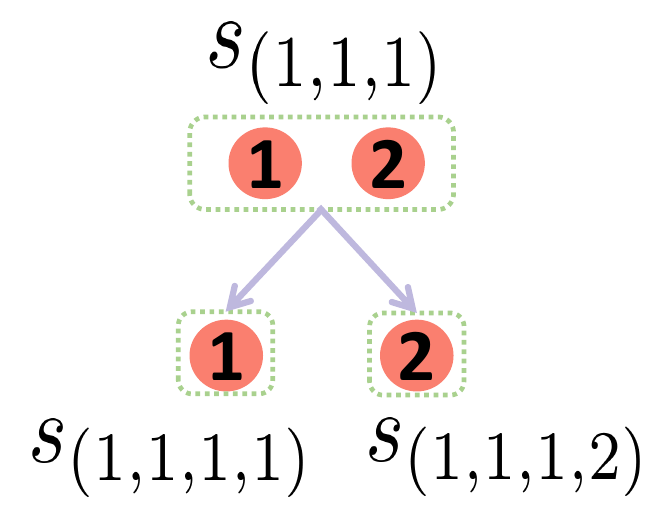}}\hspace{12pt}
	\subfloat[Permute children of $s_{(1,1,1)}$]{\includegraphics[width=3cm]{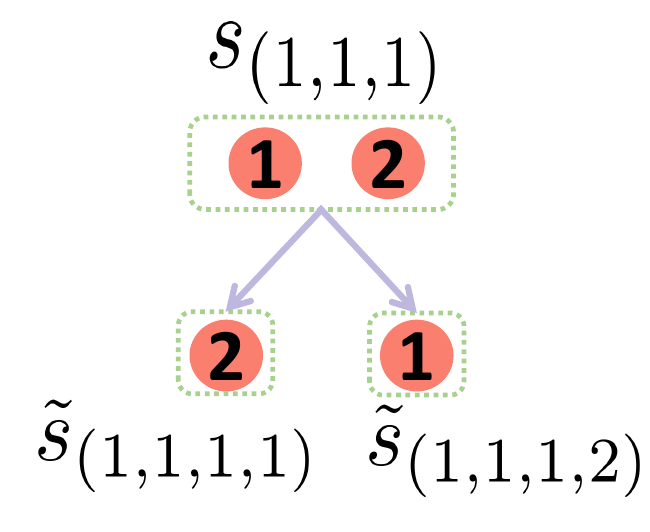}}\\
     \subfloat[$\b\psi_{(1,1,1)}$]{\includegraphics[width=4cm]{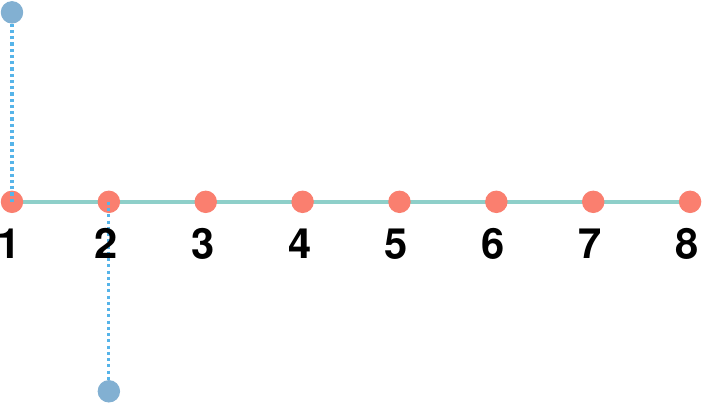}}\hspace{12pt}
	\subfloat[$\tilde{\b\psi}_{(1,1,1)}$]{\includegraphics[width=4cm]{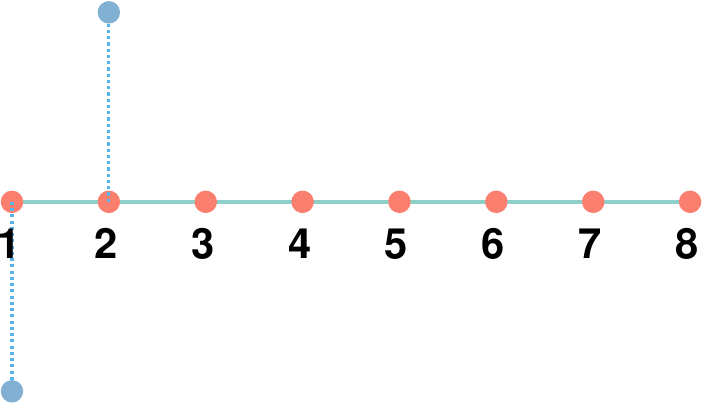}}
	\caption{Partition permutation. Consider the graph $\g_1$ and $\p_4$ in \cref{fig:short-a}. Let us permute the order of children of $s_{(1,1,1)}=\{1,2\}$ in $\p_4$ only while keeping other part of $\p_4$ and $\b p_{\b \Lambda}$ and $\b B_{\b\Lambda}$. The orginial framelet is given by $\b\psi_{(1,1,1)}$ (w.r.t. $s_{(1,1,1)}$) and the new framelet is given by $\tilde{\b\psi}_{(1,1,1)}$ (w.r.t. $\tilde s_{(1,1,1)}$). }
	\label{fig:pp-short}
\end{figure}

\subsection{Efficiency}
%
%

Graph Fourier basis based on graph Laplacian is of great importance in graph neural networks. However, the computational complexity and space complexity of generating graph Fourier basis could be as large as $O(n^3)$ and $O(n^2)$,  respectively. Hence, these reasons prevent it from being more flexible in practice when $n$ is large and the graph Laplacian is not sparse. On the other hand, when using our binary Haar graph framelets, we have an efficient way to compute our framelets as well as the framelet coefficient vector via sparse computation. For the rest of this paper, when we discuss computational complexity, we assume that all matrix/vector operations are done by using sparse operations, i.e., the operations are evaluated only on non-zero entries.

	\begin{theorem}\label{fast1}	
		Let $h>1$ be a positive integer. Assume that the $K$-hierarchical clustering $\p_K$ satisfies  $n=O(h^{K-1})$ with $h := \max_{\dim( {{\b\Lambda}}) >0}L_{ {{\b\Lambda}}}$. For $j_0\in[K]$, let $\b F=[\b u_1,\ldots,\b u_{M_\g}]$ be the framelet matrix with respect to the binary Haar graph framelet system $\mathcal{F}_{j_0}(\p_K)$ as given in \cref{cor}.
Then, for all $j_0\in[K]$, 		
the number $M_\g$ of framelet vectors in $\mathcal{F}_{j_0}(\p_K)$  is of order  $O(nh)$,  the computational complexity of generating all $\b u_m$, $m=1,\ldots, M_\g$, in  $\b F$ is of order $O(nh\log_h n)$, and 
the total number $\mathrm{nnz}(\b F)$ of nonzero entries in $\b F$ is of order $O(nh\log_h n)$.	
\end{theorem}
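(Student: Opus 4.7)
The plan is to prove all three claims by a careful level-by-level accounting on the hierarchical tree, leveraging three structural facts: (a) each $\v_j$ partitions $\v$, so $\sum_{\dim(\b\Lambda) = j} |s_{\b\Lambda}| = n$ for every level $j$; (b) the assumption $n = O(h^{K-1})$ together with the branching bound $|\v_j| \le h^{j-1}$ forces $K = O(\log_h n)$ and turns $\sum_{j=1}^K |\v_j|$ into a geometric sum of order $O(h^{K-1}) = O(n)$; and (c) in the binary construction of \cref{cor}, each row index $m$ of $\b B_{\b\Lambda}$ corresponds via \eqref{Bw:w} to an unordered pair $(\ell_1, \ell_2)$ of children, so any fixed child appears in exactly $L_{\b\Lambda} - 1$ such pairs.

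To bound $M_\g$, I would first count the scaling vectors as $|\v_{j_0}| \le n$. For the framelets at an internal level $j \in \{j_0,\ldots,K-1\}$, I would combine $M_{\b\Lambda} = L_{\b\Lambda}(L_{\b\Lambda}-1)/2 \le \f{h-1}{2}L_{\b\Lambda}$ with the parent--child identity $\sum_{\dim(\b\Lambda)=j} L_{\b\Lambda} = |\v_{j+1}|$ to obtain $\sum_{\dim(\b\Lambda)=j} M_{\b\Lambda} \le \f{h-1}{2}|\v_{j+1}|$; summing over $j$ and invoking fact (b) then yields $M_\g \le n + O(h)\cdot O(n) = O(nh)$.

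For $\mathrm{nnz}(\b F)$, the key observation is that $\supp \b\phi_{\b\Lambda} = s_{\b\Lambda}$ and $\supp \b\psi_{(\b\Lambda, m)} = s_{(\b\Lambda, \ell_1)} \cup s_{(\b\Lambda, \ell_2)}$ for the pair $(\ell_1, \ell_2)$ associated with $m$. The level-$j_0$ scaling vectors contribute exactly $n$ nonzeros by fact (a). At each internal level $j$, fact (c) lets me replace the sum over $m$ by a sum over pairs, yielding
\begin{align*}
\sum_{\dim(\b\Lambda) = j}\sum_{m=1}^{M_{\b\Lambda}}\bigl(|s_{(\b\Lambda, \ell_1)}| + |s_{(\b\Lambda, \ell_2)}|\bigr)
&= \sum_{\dim(\b\Lambda) = j}(L_{\b\Lambda}-1)\,|s_{\b\Lambda}|\\
&\le (h-1)\,n.
\end{align*}
Summing over the $K - j_0$ internal levels and applying fact (b) then gives $\mathrm{nnz}(\b F) \le n + (h-1)(K-j_0)\,n = O(nh \log_h n)$.

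For the computational cost, I would assemble $\b F$ bottom-up. The level-$K$ scaling vectors are standard basis vectors at total cost $O(n)$. For $j < K$, the recursion \eqref{1} writes $\b\phi_{\b\Lambda}$ as a combination of children on disjoint supports totalling $|s_{\b\Lambda}|$, so building level $j$ costs $n$ by fact (a); across all levels this is $O(nK) = O(n\log_h n)$. Each framelet $\b\psi_{(\b\Lambda, m)}$ is then assembled via \eqref{2} at cost proportional to its support, and the same aggregation as in the nnz argument gives $O(nh\log_h n)$ in total. Adding the two stages proves the complexity bound. The main obstacle is really just the combinatorial identity in fact (c); once the pair-indexing of the rows of $\b B_{\b\Lambda}$ is made explicit, every bound collapses to the partition identity (a) combined with the geometric-series estimate (b).
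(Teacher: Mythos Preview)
Your proposal is correct and reaches the same three bounds, but by a genuinely different accounting than the paper's proof. The paper argues pointwise at each level with worst-case bounds: it uses $\#\v_j \le C h^{j-1}$, $\|\b\phi_{\b\Lambda}\|_0 \le h^{K-j}$, $\|\b\psi_{(\b\Lambda,m)}\|_0 \le 2h^{K-j-1}$, and $M_{\b\Lambda}\le \tfrac{h(h-1)}{2}$, multiplies these together, and sums to get $C(K-1)h^K$. Your argument instead exploits the exact partition identity $\sum_{\dim(\b\Lambda)=j}|s_{\b\Lambda}|=n$ together with the double-counting observation that each child cluster sits in exactly $L_{\b\Lambda}-1$ pairs $(\ell_1,\ell_2)$; this collapses the level-$j$ contribution to $(L_{\b\Lambda}-1)|s_{\b\Lambda}|\le (h-1)n$ \emph{without} bounding individual supports or cluster counts. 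The payoff is a tighter and more transparent per-level estimate that depends only on $n$ and $h$, so the global bound needs just the single factor $K=O(\log_h n)$ from the balancedness assumption; the paper's route needs that assumption twice (once to control $h^{K-1}$ and once for the factor $K-1$). Both are valid, but yours is the cleaner argument.

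One small caution, shared with the paper: the step ``$n=O(h^{K-1})$ \ldots forces $K=O(\log_h n)$'' in your fact~(b) actually uses the reverse inequality $h^{K-1}=O(n)$; as written, $n=O(h^{K-1})$ only gives a lower bound on $K$. The paper tacitly treats the hypothesis as $n=\Theta(h^{K-1})$, and you do too. This is a statement-level issue rather than a flaw in your argument.
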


\begin{remark} 
In practice, $h$ is usually small (e.g., 2, 4, or 8), and hence $\b F$ is a sparse matrix. \cref{fast1} shows that our binary Haar graph framelet systems are efficient in processing datasets with large graphs. Moreover, the framelet coefficient vector $\hat{\b f}$ can be computed with the computational complexity of order $O(nh)$ as well. 
See \cref{fast_alg} in \cref{app:alg}  for the fast decomposition and reconstruction algorithms using our graph framelet systems. 
\end{remark}



\subsection{Permutation Equivariance}
\label{subsec:PE}

Fix $\g=(\v,\w)$ and $\p_K$. Denote our construction of graph framelets in \cref{thm1} by $\A$ where it is provided a graph $\g$ and a corresponding hierarchical partition $\p_K$ and then builds the graph framelet $\A (\g , \p_K) = \mathcal{F}_{j_0}(\p_K)$. Let $\per: \v\rightarrow\v$ be a reordering (relabelling, bijection) of $\v=\{1,2,\ldots,n\}$, i.e., $\per$ is w.r.t. a {\em node permutation} on $[n]$ with $\pi(V)=\{\pi(1),\ldots,\pi(n)\}$. We  denote $\per(\g)=(\pi(V),\pi(\w))$  with $\pi(\w):=\{(\pi(i),\pi(j)) : (i,j)\in \w\}$. The corresponding signal $\b f$ on the graph $\g$ is reordered to be $\per(\b f)$ under the newly ordered graph $\per(\g)$. In other words, given a $\per$, there exists a permutation matrix $\b P_\per$ of size $n\times n$ such that $\per(\b f)= \b P_\per \b f$.
For each node permutation $\per$, the construction $\A$ is called (node) {\em permutation equivariant} if $\A\left(\per (\g), \p_K\right) = \per\left(\A(\g, \p_K)\right)$, where $\per(\b u_m) =\b P_\per \b u_m$ for $\b u_m\in \mathcal{F}_{j_0}(\p_K)$.

Note that $\p_K$ is a {\em tree} and that the children nodes in a parent-children subtree are ordered according to the last integer in the index vectors $\b\Lambda$.  The order of nodes in such subtrees and the order of nodes in $\mathcal{V}$ are separately defined. This means a reordering of nodes in $\mathcal{V}$ does not affect the order in subtrees in $\p_K$ and vice versa. On the other hand, the reordering of tree nodes $\b\Lambda$   may result in different graph framelets. \cref{fig:pp-short} shows a simple example. 
Thus it is necessary to analyze the relationship of the graph framelets under such types of permutations. We say that $\per_p$ is a {\em partition permutation} on $\p_K$ if the permutation is on the children of each tree node ${\b\Lambda}$ only. 
For each partition permutation $\per_p$, the construction $\A$ is called {\em partition  permutation equivariant} if $\A\left(\g, \per_p(\p_K)\right) = \per_p\left(\A(\g, \p_K)\right)$, that is, there exists a permutation $\pi^*$ on $[M_\g]$ associated with $\pi_p$ such that for each $\b u_m\in \mathcal{F}_{j_0}(\p_K)$,  $\per_p(\b u_m) = c_m\b u_{\pi^*(m)}$ for some $c_m\in\{-1,+1\}$. 
We have the following theorem regarding the permutation equivariance on both node and partition permutations. 

\begin{theorem}\label{thm4}
	Let $\A (\g, \p_K)$ be the construction of the binary Haar graph framelet systems in \cref{cor}  for $j_0\in[K]$. Then, the following three statements hold:
	\begin{enumerate}
		\item[\rm(i)] For any node permutation $\per$, we have $\mathcal{A}\left( \per(\g), \p_K\right) = \per(\mathcal{A}( \g, \p_K))$.
		
		\item[\rm(ii)] For any partition permutation $\per_{p}$, we have $\mathcal{A}\left(\g, \per_p(\p_K)\right) = \per_p\left(\mathcal{A}( \g, \p_K)\right) $.
		
		\item[\rm(iii)] For any node permutation $\per$ and partition permutation $\per_p$,  we have $\mathcal{A}\left(\per(\g),  \per_p(\p_K)\right) = \per_p(\pi(\mathcal{A}( \g, \p_K)))=\pi(\pi_p(\mathcal{A}( \g, \p_K))) $.

	\end{enumerate}
\end{theorem}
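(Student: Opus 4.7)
The plan is to prove (i) and (ii) separately and then derive (iii) by combining them. The key observation driving everything is that in the binary Haar construction of \cref{cor}, both the weight vector $\b p_{\b\Lambda}=\f{1}{\sqrt{L_{\b\Lambda}}}\bm{1}$ and the matrix $\b B_{\b\Lambda}$ depend only on the integer $L_{\b\Lambda}$, not on node identities, node labels, or the edge set of $\g$; node labels enter the construction only through the base case $\b\phi_{\b\Lambda}=\b I_{:i}$ at $\dim(\b\Lambda)=K$. Thus the construction is ``label blind'', which is ultimately what makes equivariance work.

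For part (i), I would proceed by downward induction on $\dim(\b\Lambda)$, from $K$ to $j_0$. The base case is immediate: at $\dim(\b\Lambda)=K$ the singleton cluster $\{i\}$ becomes $\{\pi(i)\}$, and so the new scaling vector is $\b I_{:\pi(i)}=\b P_\pi\b I_{:i}=\b P_\pi\b\phi_{\b\Lambda}$. For the inductive step, since $\b p_{\b\Lambda}$ and $\b B_{\b\Lambda}$ are invariant under $\pi$, substituting the hypothesis $\b\phi_{(\b\Lambda,\ell)}^{\mathrm{new}}=\b P_\pi\b\phi_{(\b\Lambda,\ell)}^{\mathrm{old}}$ into \cref{1} and \cref{2} and pulling $\b P_\pi$ out by linearity yields $\b\phi_{\b\Lambda}^{\mathrm{new}}=\b P_\pi\b\phi_{\b\Lambda}^{\mathrm{old}}$ and $\b\psi_{(\b\Lambda,m)}^{\mathrm{new}}=\b P_\pi\b\psi_{(\b\Lambda,m)}^{\mathrm{old}}$. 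Collecting these equalities over $\b\Lambda$ gives $\A(\pi(\g),\p_K)=\pi(\A(\g,\p_K))$ element-by-element.

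For part (ii), I would encode $\pi_p$ as a family $\{\sigma_{\b\Lambda}\}$ of permutations, one on $[L_{\b\Lambda}]$ per non-leaf tree index, and induct downward in the same way. The scaling vector is invariant: because $\b p_{\b\Lambda}$ is constant, the sum in \cref{1} is symmetric in the children, so reordering them does not alter $\b\phi_{\b\Lambda}$ as a vector in $\RR^n$. The more delicate part is the framelet vector. Since each row of $\b B_{\b\Lambda}$ from \cref{Bw:w} encodes an ordered pair $(\ell_1,\ell_2)$ with $\ell_1<\ell_2$, one has $\b\psi_{(\b\Lambda,m)}=\f{1}{\sqrt{L_{\b\Lambda}}}(\b\phi_{(\b\Lambda,\ell_1)}-\b\phi_{(\b\Lambda,\ell_2)})$. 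After applying $\sigma_{\b\Lambda}$, the child indices $\ell_1,\ell_2$ are sent to $\sigma_{\b\Lambda}(\ell_1),\sigma_{\b\Lambda}(\ell_2)$; when $\sigma_{\b\Lambda}(\ell_1)<\sigma_{\b\Lambda}(\ell_2)$ the resulting vector equals $\b\psi_{(\b\Lambda,m')}^{\mathrm{old}}$ with $m'=m(\sigma_{\b\Lambda}(\ell_1),\sigma_{\b\Lambda}(\ell_2),L_{\b\Lambda})$, and otherwise equals $-\b\psi_{(\b\Lambda,m')}^{\mathrm{old}}$ with the two arguments swapped. This rule defines the bijection $\pi^*$ on $[M_\g]$ together with the signs $c_m\in\{-1,+1\}$.

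Part (iii) then follows from (i) and (ii) because the two permutations act on disjoint ingredients: $\pi$ affects only the base case $\b I_{:i}$ (via $\b P_\pi$), while $\pi_p$ affects only the tree indexing (via $\pi^*$ together with $c_m$). Since $\b P_\pi$ is applied coordinatewise to each vector in $\RR^n$ and $\pi^*$ merely reorders and signs the list of framelet vectors, the two operations commute, so running the downward induction with both permutations applied at once yields both equalities. The main obstacle I anticipate is the notational bookkeeping in (ii): a partition permutation at a non-root tree node propagates through all deeper subtrees, and the sign change whenever a pair $(\ell_1,\ell_2)$ is mapped to one with reversed order must be tracked carefully through the pair-to-index encoding $m(\cdot,\cdot,L_{\b\Lambda})$. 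I would handle this by proving the per-level statement for $\b\phi_{\b\Lambda}$ and $\b\psi_{(\b\Lambda,m)}$ simultaneously inside the same downward induction, so that the renamings of lower subtrees established at one step are available when analyzing $\sigma_{\b\Lambda}$ higher up in the tree.
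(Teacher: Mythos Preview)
Your proposal is correct and follows essentially the same approach as the paper: the same downward/iterative argument on $\dim(\b\Lambda)$ for (i), the same use of the constancy of $\b p_{\b\Lambda}$ for scaling-vector invariance in (ii), and the derivation of (iii) from (i) and (ii). The only noteworthy stylistic difference is in (ii): you argue directly at the level of individual framelets via the explicit pair form $\b\psi_{(\b\Lambda,m)}=\tfrac{1}{\sqrt{L_{\b\Lambda}}}(\b\phi_{(\b\Lambda,\ell_1)}-\b\phi_{(\b\Lambda,\ell_2)})$, whereas the paper packages the same observation as a matrix identity $\b B_{\b\Lambda}\b P_{\pi_{\b\Lambda}}=\b S_{\b\Lambda}\b R_{\b\Lambda}\b B_{\b\Lambda}$ for a sign matrix $\b S_{\b\Lambda}$ and permutation matrix $\b R_{\b\Lambda}$; these are two phrasings of the same combinatorial fact.
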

\begin{remark} 
\cref{thm4} shows that our binary framelet system $\mathcal{F}_{j_0}(\p_K)$  is permutation equivariant when reordering node or the tree indices. By applying \cref{thm4}, we show that our proposed GFNN model PEGFAN has the property of permutation equivariance. See \cref{prop:model} in the next section. 

Permutation equivariance is a subtle property that most of the GNNs in the literature possess since they generally employ operation that only involves the adjacency matrices, the graph Laplacians, summation, and concatenation. Nonetheless, there are works \cite{maroninvariant,morris2022speqnets} that theoretically investigate the permutation equivariance of general and specific GNNs, which is highly related to the graph classification and the importance of the topic of the expressiveness of GNNs \cite{sato2020survey,zhang2023expressive} as permutation is one of the most basic type of isomorphism on graphs. In this paper, we confine ourselves to the output consistency that permutation equivariance derives as this is coherent to our context of node classification. On the contrary, graph wavelets/framelets, especially Haar-type graph wavelets/framelets are more complicatedly generated mathematical tools and the discussion of such property is missing in both the mathematical literature and the recent works of GNNs that apply graph wavelets/framelets. In some of the works of Haar-type graph wavelets/framelets (\cite{12,H2,H3}), it is obvious that the permutation equivariance is violated if there are no further constraints. 
\end{remark}

\section{GFNN Model}
\label{sec:model}

	

\begin{figure*}[htpb!]
	\centering
	\includegraphics[width=0.8\linewidth]{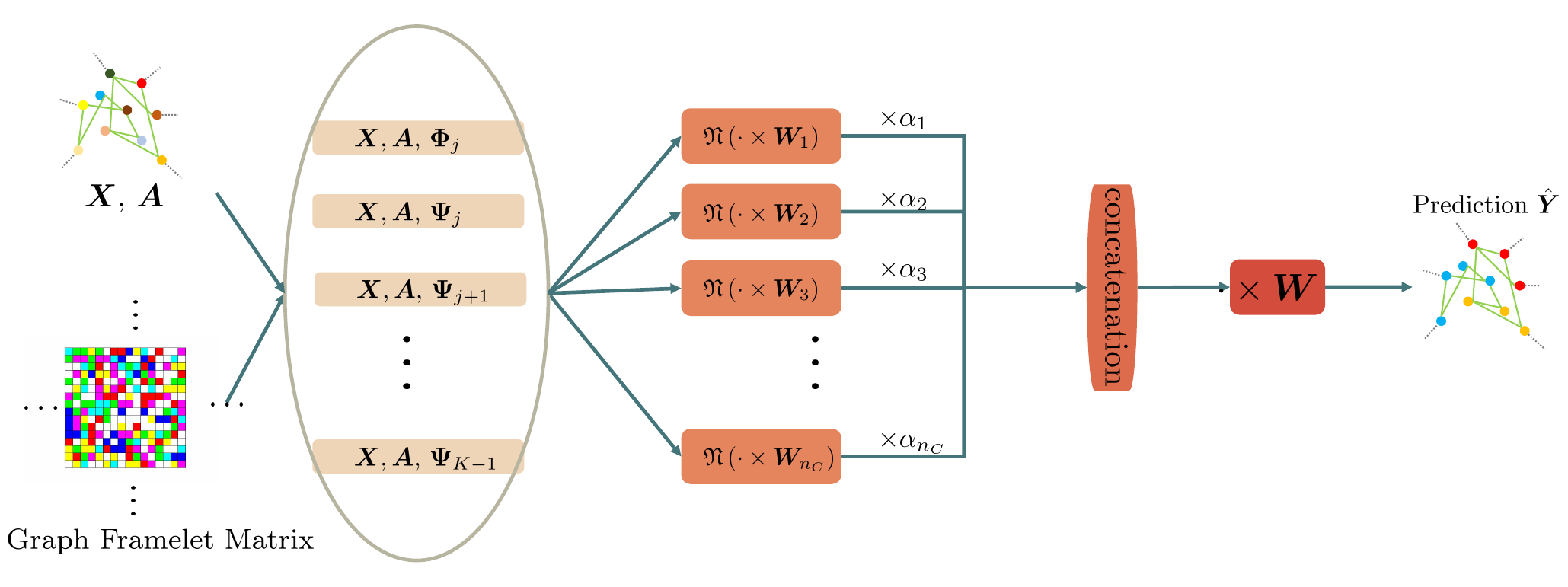}
	
	\caption{Neural network architecture. The input is the feature matrix $\b X$. The network operations are determined by the underlying adjacency matrix $\b A$ and the constructed binary Haar graph framelet system $\{\b F_0,\ldots, \b F_{K-1}\}$. The operator $\mathfrak{N}(\cdot)$ is defined as normalizing each row of any given matrix.   }
	\label{fig:nn}
\end{figure*}

We introduce the GFNN model that integrates our constructed binary Haar graph framelets, which we call \textbf{Permutation Equivariant Graph Framelet Augmented Network (PEGFAN)}, see \cref{fig:nn}.

Semi-supervised learning is characterized by involving both unlabeled and labeled data to infer a discriminative function $f$. In contrast, in supervised learning, only labeled data is utilized in obtaining $f$. In a (semi-supervised) node classification task, we assume that the first $l$ nodes are labeled. Each node $i\in \mathcal{V}$ is associated with a feature vector $\b{x}_i \in \mathbb{R}^{n_f}$ and a one-hot $\b{y}_i \in \mathbb{R}^{n_c}$ indicating the ground truth of labels, where $n_f$ and $n_c$ are the numbers of features and classes. Stacking these vectors gives a feature matrix $\b{X}\in\mathbb{R}^{n\times n_f}$ and a label matrix $\b{Y}\in \mathbb{R}^{n\times n_c}$ (the first $l$ elements are given labels and the rest part has no label and need to predict). Suppose there are $n_C$ channels, associating a series of matrices $\b{X}_1,\b{X}_2,\dots,\b{X}_{n_C}$ for each channel, and $\b{X}_i \in \mathbb{R}^{n\times d_i},1\leq i \leq n_C$. Our model is a two-layer network, which is defined as
\begin{align}
	\b{H}_1 &= \overset{n_C}{\underset{i=1}{\parallel}}\alpha_i\cdot\mathfrak{N}(\b{X}_i\b{W}_i),\\
	\hat{\b{Y}}& = \text{softmax}(\text{ReLU}(\b{H}_1)\b{W}),
\end{align}
where $\parallel$ denotes the concatenation operation, $\alpha_i$ are trainable attention weights satisfying $\alpha_i\in(0,1)$ and $\sum_i \alpha_i = 1$, $\mathfrak{N}(\cdot)$ is the row normalization operation, and $\b{W}_i\in \mathbb{R}^{d_i\times n_h}$ and $\b{W}\in \mathbb{R}^{n_Cn_h\times n_c}$ are trainable parameters. Our model comprises several input channels at the beginning and subsequently several fully connected layers. Therefore, it is easy to be extended with more layers. As usual, we minimize the cross entropy of the labeled nodes using the first $l$ columns of $\hat{\b{Y}}$ and $\b{Y}$.

Given the binary Haar graph framelet system $\mathcal{F}_{j_0}(\p_K)$,
We also use $\b \Phi_j=(\b\phi_{  {\b\Lambda}})_{\dim( {\b \Lambda} )=j} \in \RR^{N_j \times n}$ and $\b \Psi_j=(\b\psi_{(  {\b\Lambda}, m)})_{\dim( {\b\Lambda}) = j, m \in [M_{ {{\b\Lambda}}}]}  \in \RR^{M_j \times n}$ to be the matrix representations of the scaling vectors and framelet vectors at scale $j$, respectively. We denote $\b{F}_0(\b{M}):=\b{\Phi}_1^\top\b{\Phi}_1\b{M}$, $\b{F}_j(\b{M}):=\b{\Psi}_j^\top\b{\Psi}_j \b{M},1\leq j \leq K-1$.   For our GFNN model PEGFAN,  we select 3 options for $\{\b X_1,\ldots, \b X_{n_C}\}$ of feature matrices for graphs with homophily and heterophily, respectively. 

For {\em homophilous graphs}, we have 3 types:
\begin{itemize}
	\item[a)]
	$\ab n_C \ab= \ab1 +\ab K, \{\ab\b{X},\ab\b{F}_0(\b{X}),\ab\b{F}_1(\b{X}),\ab\dots,\ab\b{F}_{K-1}(\b{X})\}$.
	
	\item[b)] $\ab n_C = \ab1 + \ab r + \ab K, \{\ab\b{X},\ab\tilde{\b{A}}\b{X}\ab,\tilde{\b{A}}^2\b{X},\ab\dots, \ab\tilde{\b{A}}^r\b{X}, \ab\b{F}_0(\b{X}),\ab\b{F}_1(\b{X}),\ab\dots,\ab\b{F}_{K-1}(\b{X})\}$.
	
	\item[c)] $n_C = 1 + r + K$, $\{\ab \b{X},\ab\tilde{\b{A}}\b{X},\ab\tilde{\b{A}}^2\b{X},\ab\dots, \ab\tilde{\b{A}}^r\b{X},  \ab\b{F}_0(\tilde{\b{A}}\b{X}),\ab\b{F}_1(\tilde{\b{A}}\b{X}),\ab\dots,\b{F}_{K-1}(\tilde{\b{A}}\b{X})\}$.
\end{itemize}

For {\em heterophilous graphs}, we have 3 types:
\begin{itemize}
	\item[a)]
	$\ab n_C =\ab 1 + \ab K, \{\b{X},\ab\b{F}_0(\b{X}),\ab\b{F}_1(\b{X}),\ab\dots,\b{F}_{K-1}(\b{X})\}$.
	\item[ b)]
	$n_C \ab= 1 +\ab r +\ab K, \{\ab\b{X},\ab\b{A}\b{X}\ab,\b{A}^2\b{X},\ab\dots,\ab \b{A}^r\b{X},\ab \b{F}_0(\b{X}),\ab\b{F}_1(\b{X}),\ab\dots,\ab\b{F}_{K-1}(\b{X})\}$.
	\item[ c)]
	$ n_C =  1 +  r + K$, $\{\ab\b{X},\ab\b{A}\b{X},\ab\b{A}^2\b{X},\ab\dots, \ab\b{A}^r\b{X}, \ab\b{F}_0(\b{A}\b{X}),\ab\b{F}_1(\b{A}\b{X}),\ab\dots,\ab\b{F}_{K-1}(\b{A}\b{X})\}$.
\end{itemize}

With the permutation equivariance of our graph Haar framelets, now we can formally state the permutation equivariance of our GFNN model PEGFAN.

\begin{proposition}\label{prop:model}
Let $\g=(\v,\w)$ be a graph with  feature matrix $\b X$,  adjacency matrix $\b A$, and a $K$-hierarchical partition $\p_K$. Let  $\b P$ be a permutation matrix w.r.t. to a node permutation $\per$ on $\v$. If the permuted feature matrix $\b{P}\b{X}$, adjacency matrix $\b{P}\b{A}\b{P}^\top$,  and binary Haar graph framelet system $\per(\A(\g,\p_K))$ are used in forming the type a), b) and c) channels for PEGFAN, then the new output $\hat{\b{Y}}_{\b{P}}$ differs from the original one by a permutation matrix, i.e. $\hat{\b{Y}}_{\b{P}} = \b{P}\hat{\b{Y}}$.
\end{proposition}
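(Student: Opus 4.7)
The plan is to trace the node permutation $\pi$ with matrix $\b P$ through every stage of the PEGFAN forward pass and show that it commutes with all operations, finally appearing as a left multiplication by $\b P$ on $\hat{\b Y}$.

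First, I would verify that each channel input $\b X_i$ transforms as $\b X_i \mapsto \b P\b X_i$ under $\pi$. For the plain feature channel this is by definition. For an adjacency channel $\b A^k \b X$, the permuted adjacency is $\b P\b A\b P^\top$ and the permuted feature is $\b P\b X$, so $(\b P\b A\b P^\top)^k (\b P\b X) = \b P\b A^k \b X$, using $\b P^\top\b P = \b I$. The same identity handles $\tilde{\b A}^k\b X$ after noting that the diagonal degree matrix satisfies $\b D \mapsto \b P\b D\b P^\top$ and hence $\tilde{\b A}\mapsto \b P\tilde{\b A}\b P^\top$. For the framelet channels, \cref{thm4}(i) gives that each framelet vector is permuted as $\b\phi_{\b\Lambda}\mapsto \b P\b\phi_{\b\Lambda}$ and $\b\psi_{(\b\Lambda,m)}\mapsto \b P\b\psi_{(\b\Lambda,m)}$. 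Since $\b\Phi_j,\b\Psi_j$ are stored with framelet vectors as rows in $\RR^{N_j\times n}$ and $\RR^{M_j\times n}$, they transform as $\b\Phi_j\mapsto \b\Phi_j\b P^\top$ and $\b\Psi_j\mapsto \b\Psi_j\b P^\top$. A direct calculation then yields $(\b\Phi_1\b P^\top)^\top(\b\Phi_1\b P^\top)(\b P\b X) = \b P\b\Phi_1^\top\b\Phi_1\b X = \b P \b F_0(\b X)$, and analogously $\b F_j$ applied to the permuted inputs returns $\b P \b F_j(\b X)$. For the type c) channels, one simply composes these two commuting transformations.

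Second, I would push the overall left multiplication by $\b P$ through the two-layer architecture. Right multiplication by the shared learnable matrices $\b W_i$ and $\b W$ commutes with left multiplication by $\b P$. The row normalization, the element-wise ReLU, and the row-wise softmax all act independently per row, hence are equivariant under left multiplication by the permutation matrix $\b P$. The scalar attention weights $\alpha_i$ do not depend on the node ordering, and concatenation along the feature axis commutes with left multiplication. Therefore $\b H_1\mapsto \b P\b H_1$ under permutation, and consequently $\hat{\b Y}_{\b P} = \b P\hat{\b Y}$.

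I do not anticipate a substantive obstacle in this proof: it is essentially bookkeeping supported by the already-established \cref{thm4}(i). The only delicate point is correctly tracking the row-versus-column convention for $\b\Phi_j$ and $\b\Psi_j$, so that after permutation $\b P$ appears transposed on the right of these matrices and the cancellations $\b P^\top\b P = \b I$ proceed cleanly. Once the convention is fixed, the conclusion follows from a straightforward chain of such commutations.
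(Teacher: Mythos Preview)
Your proposal is correct and follows essentially the same approach as the paper: invoke \cref{thm4}(i) to obtain the transformation of the framelet matrices, verify that every channel input becomes $\b P\b X_i$, and then observe that the row-wise operations (normalization, ReLU, softmax), scalar attention weights, concatenation, and right multiplication by the learnable $\b W_i,\b W$ all commute with left multiplication by $\b P$. Your write-up is in fact more explicit than the paper's (you spell out the degree-matrix transformation for $\tilde{\b A}$ and the row-versus-column bookkeeping for $\b\Phi_j,\b\Psi_j$), but the underlying argument is identical.
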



\begin{remark}\label{rem:FSGNN}
In contrast to our PEGFAN, the model FSGNN \cite{5} adopts the 2-layer network model with the following 3 options of input channels: 1) Homophily: $n_C = 1 + r$, $\{\b{X}, \tilde{\b{A}}\b{X},\tilde{\b{A}}^2\b{X},\dots, \tilde{\b{A}}^r\b{X}\}$. 2) Heterophily: $n_C = 1 + r$, $\{\b{X}, \b{A}\b{X},\b{A}^2\b{X},\dots,\b{A}^r\b{X}\}$. 3) All: $n_C\ab  = 1+2r$, $\{\b{X},\b{A}\b{X},\tilde{\b{A}}\b{X},\b{A}^2\b{X},\tilde{\b{A}}^2\b{X},\dots,\b{A}^r\b{X},\tilde{\b{A}}^r\b{X}\}$.

As shown in \cref{fig:nn}, our network model differs from existing GNNs using graph wavelets/framelets in the sense that we fully utilize the multi-scale property of our Haar graph framelets as well as the powers of the adjacency matrix as \textbf{the multi-channel inputs}. In such a way, short- and long-range information of the graph are fully exploited for the training of the network model. On the contrary, neural network architectures of other existing GNNs using graph wavelets/framelets are similar to classical spectral graph neural networks, which are essentially different from ours in exploiting multi-scale information.
\end{remark}

%

\section{Experiments}\label{EX}

\subsection{Experiment on Synthetic Dataset }


In \cite{ISGCN}, it has been theoretically shown that for a linear classifier, using $\b{A}_{rw}:=\b{D}^{-1}\b{A}$ to aggregate features has a lower probability to misclassify under the condition that the ``neighborhood class distributions'' are distinguishable. To elaborate, it assumes that for each node $i$ of class $y_i=c$, the neighbors of $i$ are sampled from a distribution $\mathcal{D}_{y_i}$, and the distributions $\mathcal{D}_c$'s are different. For heterophilous graphs, it is possible to fit the aforementioned condition as long as for each node of some class, the connection pattern with nodes from each class is different from the patterns of nodes of a different class. In other words, using simple neighborhood aggregation such as $\b{A}_{rw}\b{X}$ in GNNs still has the chance to achieve good performance for heterophilous graphs and the experiments in \cite{ISGCN} has empirically validated this statement.


Following their observation, we are interested in how the neighborhood distribution $\mathcal{D}_{c}$ affects the performance of FSGNN and PEGFAN. We follow the way in \cite{ISGCN} and generate $4$-class heterophilous graphs with 3,000 nodes, fixed Gaussian features, and different neighborhood class distributions. The proportion of training, validation, and test set was set to 48\%, 32\%, and 20\%, respectively. We compare the performance of PEGFAN with FSGNN to demonstrate the ability of multi-scale extraction when our binary Haar graph framelet system is added. To emphasize the difference between graph framelets and $K$-hop aggregation, we \textbf{excluded} the feature matrix channel $\b{X}$ in the overall channels. A hyperparameter {$\gamma\in [0,1]$} indicates the tendency to sample edges from uniform neighborhood class distribution. Consequently, larger $\gamma$ results in more indistinguishable neighborhood class distributions. Implementation details are the same as shown in the subsection for the benchmark datasets except that the hyperparameter search range is reduced and $h$ is set to $4$, $8$, and $12$ (cf. \cref{fast1}). More details of the synthetic dataset experiment  are given in \cref{appen_d}. 

\cref{tab:normal1} collects results from the experiment following the procedure defined in \cref{appen_d}. \cref{tab:normal2} contains results of replacing features sampled Gaussian distributions with closer means, which are more similar for different classes and more difficult to classify. 

\begin{table}[htpb!]
	\centering
	
	\caption{Classification Accuracy on synthetic dataset with features sampled from $6(-0.75 + 0.5 c)+\xi$, where $\xi \sim N(0,1)$, $c \in \{0,1,2,3\}$.}
	\label{tab:normal1}
	\begin{tabular}{lllllll}
		\toprule
		$\gamma$ &0&0.2&0.4&0.6&0.8&1  \\ \midrule
		Ours(Type a, $h = 12$)&68.3&68.5&63.8&59.7&63.3&65.3\\
		Ours(Type a, $h = 8$)& 67.3 &63.2& 61.2&63&61&59.2\\
		Ours(Type a, $h = 4$)&63 &56.3 &43.5 &37.7 &31.8 &27.7\\
		Ours(Type b, $h = 4$)&95.7&92.3&86&74.3&61&54  \\ 
		Ours(Type c, $h = 4$)&91.8&83.8&74.2&63.5&52.3&47.2 \\
		\hline
		FSGNN($r = 3$)   &91.7&82.3&74.2&61.5&51.8&46.8 \\
		FSGNN($r = 3$, all) &92&83.5& 74.8&63 &51.7&48\\
		FSGNN($r = 8$)&92.5&85.5&73.8&62.5&51.2&47.3\\
		FSGNN($r = 8$, all)&92.3&84.3&74.7&63&52.5&47.2\\
		\bottomrule
	\label{T1}
	\end{tabular}
\end{table}

\begin{table}[htpb!]
	\centering
	
	\caption{Classification Accuracy on synthetic dataset with Features sampled from $(-0.75 + 0.5 c)+\xi$, where $\xi \sim N(0,1)$, $c \in \{0,1,2,3\}$.}
	\label{tab:normal2}
	\begin{tabular}{lllllll}
		\toprule
		$\gamma$ &	0&	0.2&0.4&0.6&0.8&1  \\ \midrule
		Ours(Type a, $h = 12$)&63.5&60&55.7&59.3&54.8&57.3\\
		Ours(Type a, $h = 8$)& 61.7& 57.5& 57 &55.8&55.3&55.8\\
		Ours(Type a, $h = 4$)& 58.8&50.7&42&36&28.7&23.2\\
		Ours(Type b, $h = 4$)&91.3&88.5&	77.5&62.7&53.7&42\\
		Ours(Type c, $h = 4$)&81.8&76.3&68.3&59.5&47.3&39.2\\
		\hline
		FSGNN($r = 3$)	&91.8&80.8&70.7&64.5&50.5&44.2 \\ 
		FSGNN($r = 3$, all)&92& 83.8&73.5&63.3&52.3&44.2\\
		FSGNN($r = 8$)&93.8&84&67.8&59.8&48.5&40.8\\
		FSGNN($r = 8$, all)&92.7&81.7&69.7&59.7&51.2&42.7\\
		\bottomrule
	\label{T2}
	\end{tabular}
\end{table}

\begin{figure}[htpb]
	\centering
	\subfloat[]{\includegraphics[width=6.5cm]{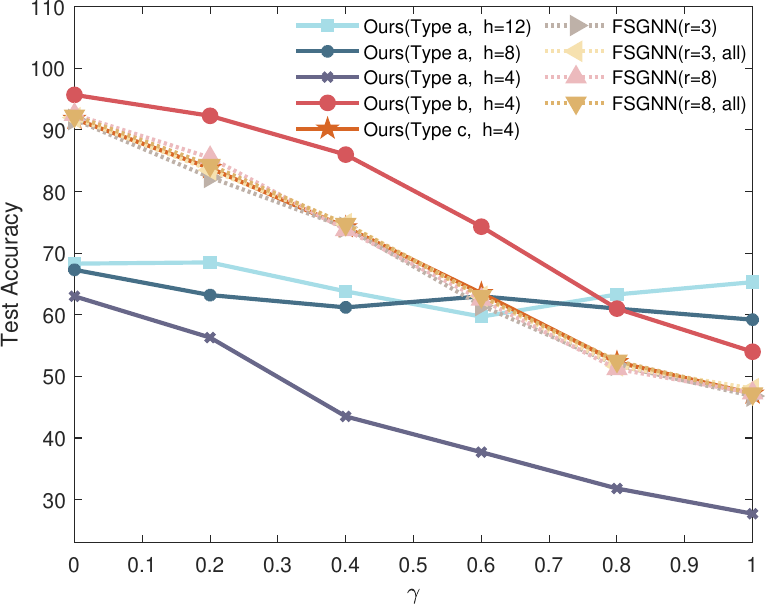}}\\
	\subfloat[]{\includegraphics[width=6.5cm]{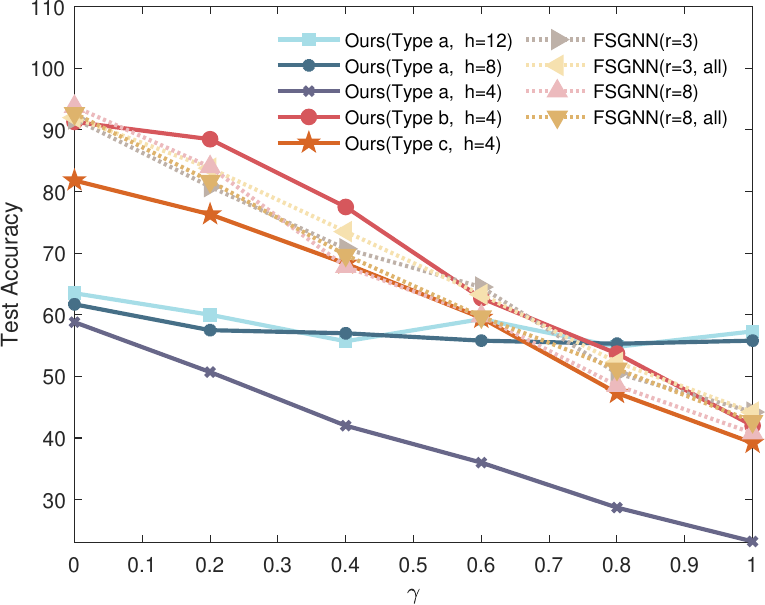}}
	    \caption{Demonstration of statistical performance comparison: (a) results in \cref{T1}; (b): results in \cref{T2}. }
	\label{LineGraph}
\end{figure}

\subsection{Comments on the Synthetic Dataset Experiment}
For the synthetic dataset, it can be seen from \cref{T1} and \cref{T1} that in most of the cases, the performance decreases with respect to $\gamma$ in all cases ({if $\gamma\rightarrow 1$ then, the graph is close to the case of being generated by uniform neighborhood class distribution}). 
However, in \cref{T1}, {\em Type b channels} input, in which graph framelets project the original feature matrix {$\b{X}$}, show a large improvement compared with FSGNN. In some cases, the increment can reach over 10\%. This evidently shows the effectiveness of multi-scale extractions via graph framelets when combined with adjacency matrix aggregations. There are also drawbacks, which can be seen from the results of {\em type a channels input} in both tables. It shows that the results are sensitive to hyperparameter $h$ and that using graph framelets alone is not enough. Indeed, we chose a rather simple and unsupervised way to generate hierarchical clustering. This process altered the representation of the connectivity among nodes and caused a loss of information. Therefore, it is better to combine fine-scale information using $1$ to $3$-hop aggregation and coarse scale projection via graph framelets. However, the performance of {\em Type a} channels has less variation across different $\gamma$ and is better when $\gamma$ is closed to $1$. It is also obvious that the neighborhood distributions affect our model performance for {\em Type b} and {\em Type c} given the theory in \cite{ISGCN}, in which the model accuracy decreased as the neighborhood distributions approached the same and indistinguishable uniform distribution. As for {\em Type c channels} in both tables, the channels are affected by the adjacency matrix before multi-scale extraction and thus they perform similarly compared with FSGNN. In \cref{T2}, since it is more difficult to correctly classify nodes, {\em Type b channels}  gain less improvement as compared to \cref{T1}.

\subsection{Experiments on Benchmark Datasets}

\begin{table*}[htpb!]
	\scriptsize
	\centering
	\caption{Dataset statistics, classification accuracy, and standard deviation. \textbf{Best} in bold, {\color{blue}{second best}} in blue.}
	
	\setlength\tabcolsep{3.5pt}
	\label{tab:InforOfDatasets}
	\begin{tabular}{l|ccc|ccc|c|cc|cl}
		\toprule
		& Cora & Citeseer & Pubmed & Texas & Wisconsin & Cornell & Actor & Chameleon & Squirrel & Avg. & Rank \\ \midrule
		Node                 &2,708 &3,327 &19,717 &183 &251 &183 &7,600  &2,277 &5,201      & -  & -  \\
		$\Vert \b{A}\Vert_0$  &10,556 &9,228 &88,651 &325 &515 &298 &30,019  &36,101 &217,073 & - & -  \\
		Feature              &1,433 &3,703 &500 & 1,703 &1,703 &1,703 &931 &2,325 &2,089                  & - & -                      \\
		Density              &$1.44\cdot 10^{-3}$&$8.34\cdot 10^{-4}$&$2.28\cdot10^{-4}$&$9.70\cdot 10^{-3}$ &$8.17\cdot 10^{-3}$ &$8.90\cdot 10^{-3}$ &$5.20\cdot 10^{-4}$ &$6.96 \cdot 10^{-3}$ &$ 8.02 \cdot 10^{-3}$                                                   & - & -\\
		Class                &6 &7 &3 &5 & 5& 5&5 &5 & 5  & -& -                                                                                    \\
		Type                 &Homophily &Homophily &Homophily &Heterophily &Heterophily &Heterophily &Heterophily &Heterophily &                                                                                     Heterophily & -& -\\ \midrule
		Mixhop         & 87.61$\pm$0.85 & 76.26$\pm$1.33 &  85.31$\pm$0.61 &77.84$\pm$7.73 &75.88$\pm$4.90 & 73.51$\pm$6.34 & 32.22$\pm$2.34 & 60.50$\pm$2.53&43.80$\pm$1.48 & 68.10 & 14\\
		GEOM-GCN       &85.27  &  \textbf{77.99} & {\color{blue}{90.05}}      & 67.57      &  64.12          & 60.81      & 31.63      &  60.90         &  38.14         & 64.05 & 15 \\
		GCNII          &88.01$\pm$1.33 &77.13$\pm$1.38&\textbf{90.30$\pm$0.37} & 77.84$\pm$5.64 &81.57$\pm$4.98& 76.49$\pm$4.37 &  -    & 62.48$\pm$2.74    &-      &  -&  -\\
		H2GCN-1        &86.92$\pm$1.37 & 77.07$\pm$1.64&89.40$\pm$0.34 &84.86$\pm$6.77 &86.67$\pm$4.69& 82.16$\pm$4.80 &35.86$\pm$1.03 &57.11$\pm$1.58 & 36.42$\pm$1.89 & 70.72 &  13 \\
		WRGAT          &{\color{blue}{88.20$\pm$2.26}}&76.81$\pm$1.89&88.52$\pm$0.92& 83.62$\pm$5.50& 86.98$\pm$3.78 &81.62$\pm$3.90 & \textbf{36.53$\pm$0.77} &65.24$\pm$0.87 & 48.85$\pm$0.78 & 72.93 & 11 \\
		GPRGNN         &\textbf{88.49$\pm$0.95}&77.08$\pm$1.63&88.99$\pm$0.40& 86.49$\pm$4.83&85.88$\pm$3.70&81.89$\pm$6.17& {\color{blue}{36.04$\pm$0.96}}&66.47$\pm$2.47 &49.03$\pm$1.28 & 73.37 & 10 \\
		\hline
		FSGNN($r=3$)     &86.92$\pm$1.66&77.18$\pm$1.27&89.71$\pm$0.45&84.51$\pm$4.71&\textbf{87.84$\pm$3.37}&84.86$\pm$4.56&35.26$\pm$1.01&78.60$\pm$0.71&73.93$\pm$2.00    & 77.65 & 5\\
		FSGNN($r=8$)     &88.15$\pm$1.15&77.23$\pm$1.41&89.67$\pm$0.45&\textbf{86.76$\pm$3.72}&{\color{blue}{87.65$\pm$3.51}}&85.95$\pm$5.10&35.22$\pm$0.96&79.01$\pm$1.23&73.78$\pm$1.58 & 
		\color{blue}{78.16} & \color{blue}{2$^{\mathrm{nd}}$} \\
		FSGNN($r=3$, all) &87.59$\pm$1.03&76.91$\pm$1.60&89.68$\pm$0.37&84.60$\pm$5.41&86.67$\pm$2.75&{86.22$\pm$6.78}&35.51$\pm$0.89&77.68$\pm$1.10&73.79$\pm$2.32          &   77.63 & 6\\
		FSGNN($r=8$, all) &87.53$\pm$1.37&76.86$\pm$1.49&89.73$\pm$0.40&82.70$\pm$5.01&85.88$\pm$5.02&85.13$\pm$7.57&35.28$\pm$0.79&77.94$\pm$1.17&74.04$\pm$1.51 &    77.23& 8\\
	\hline
		Ours($h=4$, Type a)   &79.48$\pm$2.68&71.29$\pm$2.01&88.46$\pm$0.35 &83.78$\pm$5.54&86.08$\pm$4.34           &85.95$\pm$5.51   &34.96$\pm$1.24       &65.83$\pm$2.05          &51.98$\pm$1.98         &   71.97 & 12\\
		Ours($h=4$, Type b)   &87.12$\pm$0.91&{\color{blue}{77.39$\pm$1.28}}&89.62$\pm$0.25        &{\color{blue}{86.47$\pm$5.54}}&86.67$\pm$3.59     &85.14$\pm$5.57         &35.07$\pm$1.03&79.63$\pm$1.23     &73.89$\pm$1.89         &   77.88 & 4 \\
		Ours($h=4$, Type c)   &87.36$\pm$1.09 &76.78$\pm$1.51          &89.55$\pm$0.32&85.14$\pm$4.05&87.65$\pm$4.02&\textbf{86.76$\pm$5.33}& 35.41$\pm$0.82&{\color{blue}{79.65$\pm$1.33}}&{\color{blue}{74.58$\pm$2.07}}          & 78.10 & 3$^{\mathrm{rd}}$ \\ 
		Ours($h=8$, Type a)   &83.16$\pm$1.86 &73.51$\pm$1.67  &88.85$\pm$0.30  &84.32$\pm$3.78 &86.67$\pm$3.80 &84.05$\pm$6.10 &35.15$\pm$0.77&77.48$\pm$1.71&71.10$\pm$1.75        & 76.03& 9 \\ 
		Ours($h=8$, Type b)   &87.22$\pm$1.21 &76.76$\pm$1.40  &89.73$\pm$0.40  &84.87$\pm$5.70 &85.69$\pm$3.29 &84.60$\pm$5.41 &35.34$\pm$0.81&79.21$\pm$1.09 &73.09$\pm$1.66        &  77.39 & 7 \\ 
		Ours($h=8$, Type c)   &87.16$\pm$1.31 &76.92$\pm$1.57  &89.56$\pm$0.30  &86.22$\pm$3.30 &86.67$\pm$4.28 &{\color{blue}{86.22$\pm$4.75}} &35.48$\pm$0.94 & \textbf{80.31$\pm$1.10} & \textbf{75.06$\pm$1.72}        & \textbf{78.18}& {\bf 1}$^{\tiny\mathrm{st}}$ \\ 
		\bottomrule
	\end{tabular}
	
\end{table*}


We conducted experiments on 9 datasets including 3 homophilous citation networks and 6 heterophilous datasets and followed the public data splits provided in \cite{GEOMGCN}.  
We define the density of a graph as $\Vert \b{A}\Vert_0/n^2$, which is the proportion between the number of non-zero terms in $\b{A}$ and the numbers of terms of $\b{A}$.} The statistics is summarized in the top rows of Table \ref{tab:InforOfDatasets}.
To generate a series of partitions for each dataset, we applied $\textit{sknetwork.hierarchy.Ward}$ and $\textit{sknetwork.hierarchy.cut\_balanced}$ from python package $\textit{scikit-network}$\footnote{https://scikit-network.readthedocs.io/en/v0.26.0/} to form intermediate clusters and control the hyperparameter $h$ in \cref{fast1}. $h$ is set to $4$ and $8$ and the values are indicated in \cref{tab:InforOfDatasets}. Once new partition $\v''$ of clusters is formed from a   graph $G' = (\mathcal{V}',\b{A}')$, we  define as follows the new adjacency matrix $\b{A}''$  to form the graph $G''=(\mathcal{V}'',\b{A}'')$ for next level clustering:
		\[
			\b{A}''_{ij} =\sum_{p=1}^{n'}\sum_{q=p+1}^{n'} \b{A}'_{pq}\delta(ID(p),i)\delta(ID(q),j),
		\]
		where $\# \mathcal{V}' = n'$, $\# \mathcal{V}'' = m'$,  $ID(p),ID(q)$ are the indices of clusters that nodes $p$ and $q$ belong to and $\delta(a,b)$ takes $1$ when $a=b$. For heterophilous graphs, we iterate for a few steps until the final graph has less than $h=4$ or $h=8$ nodes.  For {\it Pubmed}, when $h=4$ we constrained the number of steps of generating hierarchical clustering to be 6 so as to reduce input channels. 


%
As for the implementation of the neural network\footnote{https://github.com/zrgcityu/PEGFAN}, we adopted the publicly released code of FSGNN\footnote{https://github.com/sunilkmaurya/FSGNN/}  for integrating the graph framelet projections as detailed in our PEGFAN model. We use the same optimizer, hidden layer size, etc., as those in FSGNN, and hence the details are omitted. We noticed that the outcome of FSGNN was a bit different from those reported in \cite{5} when we tried to reproduce the results. Therefore we did a separate grid search for FSGNN and the results had slight changes. For our model, we set $r$ of input channels to $3$. Results of other models (Mixhop\cite{6}, GEOM-GCN\cite{GEOMGCN}, GCNII\cite{GCNII}, H2GCN-1\cite{8}, WRGAT\cite{WRGAT}, GPRGNN\cite{GPRGNN}) are cited from \cite{5} and the results of some of the top rows are omitted, which are not among the models with relatively superior performance. All results are collected in Table \ref{tab:InforOfDatasets}.  

As a brief comparison, \cref{tab:traintime} summarizes the average, maximum, and minimum training time of our model and FSGNN on {\em Chameleon} and {\em Squirrel} over 108 sets of hyperparameters shown in \cref{HP} of \cref{appen_d}. Each training consists of 10 individual training, each of which is on a single data split. All experiments in this paper were conducted using an RTX 3090 graphics card.


\begin{table}[hb!]
	\centering
	\scriptsize
		\caption{Training time over 108 configurations of hyperparameters. Number of Channels: FSGNN($r=3$): 4, FSGNN($r=8$): 9, Ours(Type c, $h=4$): 13 (Chameleon), 14 (Squrriel)}
		\setlength\tabcolsep{2.5pt}
	\label{tab:traintime}
    \begin{tabular}{llll}
    \toprule
    Chameleon &avg.   &max.    &min.\\ \midrule
    FSGNN($r=3$) &20.71s &64.68s &11.21s \\
    FSGNN($r=8$) &37.41s &90.70s &17.75s \\
    Ours(Type c, $h=4$) &47.66s &125.25s &22.52s\\ \midrule \midrule
    Squirrel &avg.       &max.  & min. \\ \midrule
    FSGNN($r=3$) &34.57s   &83.24s  &18.58s  \\
    FSGNN($r=8$) &53.40s   &141.24s &26.52s  \\
    Ours(Type c, $h=4$) &59.82s &198.77s &32.87s \\ 
    \bottomrule
    \end{tabular}
\end{table}

\begin{figure}[htpb]
	\centering
	\subfloat[]{\includegraphics[width=6.5cm]{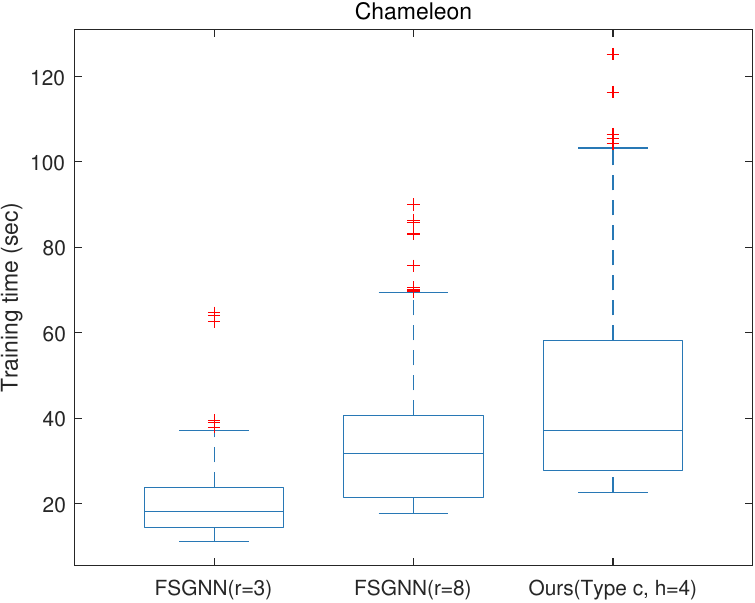}}\\
	\subfloat[]{\includegraphics[width=6.5cm]{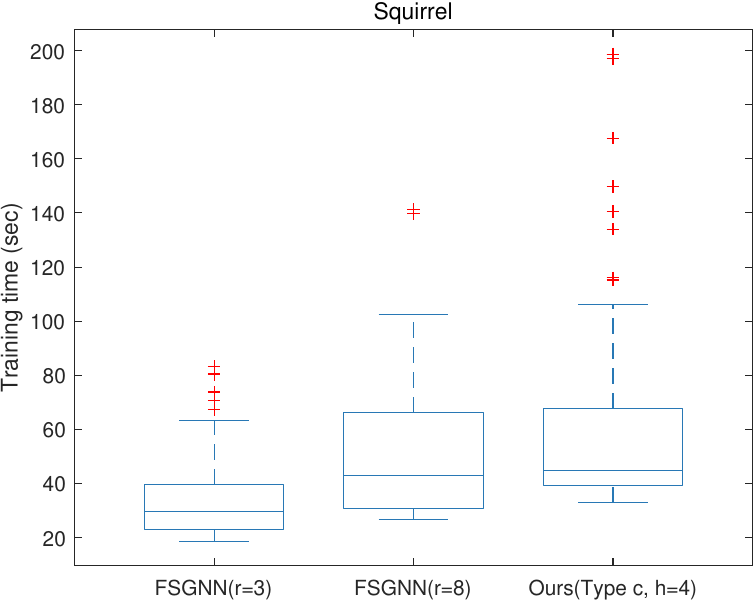}}
	    \caption{Demonstration of training time comparison: (a) Chamemleon dataset; (b) Squirrel dataset}
	\label{train_time}
\end{figure}


\subsection{Comments on Benchmark Dataset Experiments}
We provide in \cref{tab:InforOfDatasets}  not only the performance of many state-of-the-art models but also their performance on both the homophilous and heterophilous graph datasets (9 datasets in total). 

As pointed out in the Introduction, traditional models are usually with the underlying assumption of homophily. They perform well for homophilous graph datasets. One can clearly see from  \cref{tab:InforOfDatasets} that the best performances for the three typical homophilous datasets ({\em Cora}, {\em Citeseer}, and {\em Pubmed}) are given by GEOM-GCN, GCNII, and GPRGNN. For the homophilous datasets, their nature of being homophilous does not necessitate the need for further multi-scale information, and thus our method has a similar performance. The same drawback is shown as in the results of synthetic data, where the results of {\em Type a channels} are not superior and sensitive to the hyperparameter $h$. It empirically shows that to use framelets alone, it is required to form sufficiently large clusters at the beginning of forming hierarchical partitions. 

While models such as
GEOM-GCN, GCNII, and GPRGNN perform well in those homophilous datasets, they do not give the best performance for the other six heterophilous datasets.  The models that give the best performance for heterophilous datasets are FSGNN and our PEGFAN.

Now between FSGNN and our PEGFAN, from the above discussion, we only need to focus on the 6 heterophilous datasets: {\em Texas}, {\em Wisconsin}, {\em Cornell}, {\em Actor}, {\em Chamelon}, and {\em Squirrel}. We would like to emphasize that we follow the most common way that uses the public data splits in \cite{GEOMGCN}. The proportions of train-validation-test splits are all 48\%, 32\%, 20\%. These 6 datasets can be considered as three groups discussed as follows. 

The first group is the datasets of {\em Texas}, {\em Wisconsin}, and {\em Cornell}. They are similar datasets with a small number of nodes, edges, and features. Since the test sets are only 20\% of the graph, they contain at most 51 nodes. A correctly predicted node accounts for at least 1.9\% of accuracy. Hence we can say that experiments on such datasets are relatively and statistically insignificant. Most of the models have very similar performance with at most 7 nodes wrongly predicted. Nonetheless, we chose to follow the common conduct and report the results for completeness. PEGFAN is best for {\em Cornell} while ranks second for {\em Texas}. FSGNN is best for {\em Wisconsin} and {\em Texas} while ranks third for {\em Cornell}. The best or second-best performances of FSGNN and PEGFAN are without much difference. Since the number of nodes is too small. It is not reasonable to say one is better than the other.

The second group is simply the dataset {\em Actor}. It is a large dataset with 7,600 nodes. However, for this Actor dataset, all models, including Mixhop, GEOM-GCN, GCNII, etc., do not give reasonable performance. They only give very low accuracy about 35\%. The best performance is given by the model WRGAT. For this dataset, it is not reasonable to compare performance among different methods.

The last group is the datasets of {\em Chameleon} and {\em Squirrel}. They are both big datasets in terms of nodes and edges. We can see that PEGFAN performs the best. Being heterophilous makes it necessary to gather multi-scale information, and denser graphs facilitate forming better series of partitions and thus better graph framelets. This is also consistent with the empirical results of the synthetic dataset since as shown in \cite{ISGCN}, the neighborhood distributions of {\em Chameleon} and {\em Squirrel} are distinguishable enough for different classes, while other heterophilous datasets either are small datasets that suffer from bias or do not fit such a condition.

Moreover, to compare the {\em overall performance} of each method on the 9 benchmark datasets, we take the average of the performance of each method over the 9 datasets. The average score of each method is given in the second last column (Avg.) of \cref{tab:InforOfDatasets}. Our method with respect to $h=8$ and Type c (the last row) ranks first among the comparing methods. In general, our Type c methods outperform other methods with high average overall performance. See the last column of \cref{tab:InforOfDatasets} for the ranking of each method. 

\section{Conclusion}

This paper proposes a novel and general method to construct Haar-type framelets on graphs that are permutation equivariant. It aims to serve as an alternative and supplement for multi-hop aggregations using powers of adjacency matrices. The results show that combining graph framelets and multi-hop aggregation increases the performance of node classification on heterophilous graphs in both synthetic and real-world data. Moreover, compared with using multi-hop aggregation alone, in the synthetic case our model shows significant increases against the deterioration of neighborhood distribution and results show consistency between the synthetic and benchmark datasets in terms of the patterns of neighborhood distribution. The overall results validate the capability of our graph framelets to extract multi-scale information under certain conditions and its superior performance.  Finally,  we would also like to mention that choosing a more sophisticated way to generate the hierarchical partitions has the potential to produce better graph framelets, which will be a future experimental direction to explore.



\begin{appendix}	
	\subsection{Proofs of theoretical results}
	\label{app:A}	

\begin{proof}[Proof of \thmref{thm1}]
We denote $\b \Phi_j =  \{ \b\phi_{  {\b\Lambda}} \}_{\dim (  {\b\Lambda})=j}$ and $\b \Psi_j = \{\b\psi_{(  {\b\Lambda}, m)} \}_{\dim (  {\b\Lambda})=j, m\in [M_{ {{\b\Lambda}}}]}$. Let $V_j :=\Span \b\Phi_j$ and $W_{j}:=\Span\b \Psi_j$.
Note that supports of $\b\phi_{  {\b\Lambda}}$ and $\b\phi_{  {\b\Lambda}'}$ are disjoint if $\b\Lambda\neq {\b\Lambda'}$, so are  $\b\psi_{(  {\b\Lambda}, m)}$ and $\b\psi_{(  {\b\Lambda}', m')}$. Hence, by definition and $\|\b p_{\b \Lambda} \| =1$, we can see that $\b\Phi_j$ forms an orthonormal basis of $V_j$ for each $j$. Thus by Lemma 1 in \cite{li2022convolutional}, the conditions
		$\b B_{ {{\b\Lambda}}} \b B_{ {{\b\Lambda}}}^\top \b B_{ {{\b\Lambda}}} = \b B_{ {{\b\Lambda}}}$, $ \b B_{ {{\b\Lambda}}} \b p_{ {{\b\Lambda}}} = \b 0 $, and $\rank(\b B_{ {{\b\Lambda}}}) = L_{ {{\b\Lambda}}}-1$ are equivalent to that $V_{j+1} = V_{j} \oplus W_j$ and $\{ \b\phi_{  {\b\Lambda}} \}_{\dim (  {\b\Lambda})=j} \cup \{\b\psi_{(  {\b\Lambda}, m)} \}_{\dim (  {\b\Lambda})=j, m\in [M_{ {{\b\Lambda}}}]}$ is a tight frame of $V_{j+1}$.  Iteratively, for $j_0<j$, we deduce that  $V_{j_0}\oplus W_{j_0}\oplus\cdots\oplus W_{j-1} = V_j$ and $\Phi_{j_0} \cup \Psi_{j_0} \cup \cdots \cup \Psi_{j-1}$ is a tight frame for  $V_j$  if and only if matrices $\b B_{ {{\b\Lambda}}}$ and vectors $\b p_{ {{\b\Lambda}}}$ satisfy $\b B_{ {{\b\Lambda}}} \b B_{ {{\b\Lambda}}}^\top \b B_{ {{\b\Lambda}}} = \b B_{ {{\b\Lambda}}}$, $ \b B_{ {{\b\Lambda}}} \b p_{ {{\b\Lambda}}} = \b 0 $, and $\rank(\b B_{ {{\b\Lambda}}}) = L_{ {{\b\Lambda}}}-1$ for all $\b\Lambda$ with $\dim( { {\b\Lambda}}) = j_0,\ldots, j$. Now  the conclusion of the theorem follows by letting $j=K$ and noting that $\mathcal{F}_{j_0}(\p_K)=\Phi_{j_0} \cup \Psi_{j_0} \cup \cdots \cup \Psi_{K-1}$ as well as $V_K = L_2(\g)$. 
\end{proof}

 \begin{proof}[Proof of \propref{B}]
If $\b B^\top \b B = c (\b I-\b p \b p^\top)$, then $\b B \b B^\top \b B = c \b B$ by direct computation and in view of $\b B \b p = \b 0$. Conversely, if $\b B \b B^\top \b B = c \b B$ for some constant $c$. Then,  by $\b B (\b B^\top \b B) =  c \b B = c \b B  ( \b I -\b p \b p^\top)$ and  $\b p^\top( \b I - \b p \b p^\top) =\b 0$,  we have $\left[\begin{matrix}\b p^\top \\ \b B\end{matrix}\right] (\b B^\top \b B- c(\b I- \b p\b p^\top)) = \b 0$. Consequently, by the full rank property of the matrix $[\b p, \b B^\top]$, we conclude that $\b B^\top \b B = c(\b I -\b p\b p^\top)$. The particular part follows by direction evaluation. 
we are done.
\end{proof}

\begin{proof}[Proof of \corref{cor}]
We only need to show that $\b B_{\b\Lambda}$ and $\b p_{\b\Lambda}$ satisfy
$\b B_{ {{\b\Lambda}}} \b B_{ {{\b\Lambda}}}^\top \b B_{ {{\b\Lambda}}} = \b B_{ {{\b\Lambda}}}$, $ \b B_{ {{\b\Lambda}}} \b p_{ {{\b\Lambda}}} = \b 0 $ and $\rank(\b B_{ {{\b\Lambda}}}) = L_{ {{\b\Lambda}}}-1$. Obviously, $ \b B_{ {{\b\Lambda}}} \b p_{ {{\b\Lambda}}} = \b 0 $. Define
$\b A_{\b\Lambda}:=[\b p_{\b\Lambda}, \b B_{\b\Lambda}^\top]^\top$.  By direct evaluations, one can show that the columns of $\b A_{\b\Lambda}$ satisfy  $\| [\b A_{\b\Lambda}]_{:\ell_1}\| = 1$ and their inner product $\langle [\b A_{\b\Lambda}]_{:\ell_1},[\b A_{\b\Lambda}]_{:\ell_2}\rangle=0$ for all $\ell_1\neq \ell_2$. That is,  $\b A_{\b\Lambda} ^\top \b A_{\b\Lambda} = \b I$, where $\b I$ is the identity matrix of size $L_{\b\Lambda}$. Consequently, we deduce that $\b B_{ {{\b\Lambda}}}^\top \b B_{ {{\b\Lambda}}} = \b A_{\b\Lambda} ^\top \b A_{\b\Lambda} - \b p_{\b\Lambda}\b p_{\b\Lambda}^\top = \b I -  \b p_{\b\Lambda}\b p_{\b\Lambda}^\top$, which then implies $\b B_{\b\Lambda} \b B_{\b\Lambda} ^\top \b B_{\b\Lambda} = \b B_{\b\Lambda} (\b I-\b p_{\b\Lambda}\b p_{\b\Lambda}^\top) = \b B_{\b\Lambda} $ in view of  $\b B_{\b\Lambda} \b p_{\b\Lambda}=\b 0$. Now  $\rank(\b B_{ {{\b\Lambda}}}) = L_{ {{\b\Lambda}}}-1$ directly follows from that $\b A_{\b\Lambda}$ is of full column rank and $\b B_{\b\Lambda} \b p_{\b\Lambda}=\b 0$. We are done.
\end{proof}

\begin{proof}[Proof of \cref{sparsity}]
We first consider the sparsity of $\langle \b I_{:1}, \b\psi_{( {\b\Lambda}, m)} \rangle$, $m=1,\dots,M_{ {{\b\Lambda}}}$. Notice that only when the node $1 \in s_{ {{\b\Lambda}}}$, can the term $\langle \b I_{:1},  \b\psi_{( {\b\Lambda}, m)} \rangle$  be nonzero. Thus, without loss of generality, we assume that $1\in s_{\b\Lambda}$. 
%
Thus, by our construction in \cref{cor}, at most $h-1$ framelets $\b\psi_{(  {\b\Lambda}, m)}$ that make $\langle \b I_{:1}, \b\psi_{( {{\b\Lambda}},m)} \rangle \neq 0$.  For each $j$, only one cluster $s_{\b\Lambda}$ of $\v_j=\{s_{\b\Lambda} : \dim(\b\Lambda) = j\}$ contains node $1$. Thus $\b F^\top \b I_{:1}$ has at most $(h-1)(K-1)$ nonzero entries. Similar results hold for $\b I_{:i}$. Hence, for $\b f = [f_1,\ldots,f_n]^\top$, it is easy to show that $\|\hat {\b f}\|_0=\|\b F^\top \b f\|_0 =  \|\sum_{i\in[n], f_i\neq0} \b F^\top\b I_{:i}\|_0 \leq \sum_{i\in [n], f_i\neq 0} \|\b F^\top \b I_{:i}\|_0 \leq (h-1)(K-1)\|\b f\|_0$.
	\end{proof}

	For generating framelets, we use \cref{alg:matrix} (\cref{1,2}). Its efficiency is discussed in \cref{sparsity}.

	\begin{algorithm}[htpb!]
		\caption{Generating framelets}
		\label{alg:matrix}
		\begin{algorithmic}
			\State {\bfseries Input:} Node set $\v$, Partition $\p_K$, Vectors $ \{\b p_{\b\Lambda}\}$, Matrices $\{\b B_{\b\Lambda}\}$, $j_0$
			\State initialize $\mathcal{F}_{j_0}(\p_K) = \emptyset$, $\phi_{  {\b\Lambda}} = \b I_{:i}$ for any $s_{\b\Lambda}=\{i\}$.
			\For{$j=2$ {\bfseries to} $j_0-1$}
			\For{$\b\Lambda \in  \{\b\Lambda:\dim(\b\Lambda)=j\}$}
			\State $\b\phi_{  {\b\Lambda}} := \sum_{\ell\in[L_{  {\b\Lambda}}]} p_{(  {\b\Lambda}, \ell)}  \b\phi_{(  {\b\Lambda}, \ell)}$
			\For{$m=1$ {\bfseries to} $M_{\b\Lambda}$}
			\State $\b\psi_{(  {\b\Lambda}, m)} := \sum_{\ell\in[L_{  {\b\Lambda}}]} \left(\b B_{ {\b\Lambda} }\right)_{m,\ell}  \b\phi_{(  {\b\Lambda}, \ell)}$
			\EndFor
			\State update $\mathcal{F}_{j_0}(\p_K) \leftarrow \mathcal{F}_{j_0}(\p_K) \cup \{\b\phi_{  {\b\Lambda}},\b\psi_{(  {\b\Lambda}, m),m=1,\dots,M_{\b\Lambda}} \}$
			\EndFor
			\EndFor
			\State {\bfseries Output:} $\mathcal{F}_{j_0}(\p_K)$
		\end{algorithmic}
	\end{algorithm}

	\begin{proof}[Proof of \cref{fast1}]
		Note that  we have $n \leq C h^{K-1}$  and $\#\v_j= \#\{\b\Lambda:\dim(\b\Lambda)=j \}  \leq C h^{j-1}$ for some fixed constant $C>0$. 	Moreover,		$M_{ {{\b\Lambda}}} = \f{L_{ {{\b\Lambda}}}(L_{ {{\b\Lambda}}}-1)}{2} \leq \f{h(h-1)}{2}$. Therefore, there is no more than $		C(h^{j_0-1}+\sum_{j=j_0}^{K-1} \f{1}{2}h(h-1)  h^{j-1}) =  O(nh)$
		 elements in the binary graph Haar framelet system $\mathcal{F}_{j_0}(\p_K)$ for any $j_0\in[K]$. By \cref{1,2}, we have
		\begin{align}\label{alg2}
		\b\phi_{  {\b\Lambda}}^\top
		:=
		\b p_{\b \Lambda}^\top
		\begin{bmatrix}
		\b\phi_{(\b\Lambda, 1)}^\top \\ \vdots \\ \b\phi_{{(\b\Lambda, L_{\b\Lambda})}}^\top
		\end{bmatrix}
		,\quad
		\begin{bmatrix}
		\b\psi_{(\b\Lambda,1)}^\top \\ \vdots \\ \b\psi_{(\b\Lambda,M_{\b\Lambda})}^\top
		\end{bmatrix}
		:=
		\b B_{\b\Lambda}
		\begin{bmatrix}
		\b\phi_{(\b\Lambda, 1)}^\top \\ \vdots \\ \b\phi_{{(\b\Lambda, L_{\b\Lambda})}}^\top
		\end{bmatrix}.
		\end{align}
{By our construction, there is at most $h^{K-j}$ nonzero entries  for each $\b\phi_{  {\b\Lambda}}$ and at most $2\cdot h^{K-j-1}$ nonzero entries for each $\b\psi_{  ({\b\Lambda},m)}$ for $\dim(\b\Lambda) = j$.}	 Hence, the number of nonzero entries of $\b F$ is at most
$C(h^{K-j_0}\cdot h^{j_0-1}+\sum_{j=j_0}^{K-1} 2 h^{K-j-1} \cdot \f{h(h-1)}{2} \cdot h^{j-1} )\le   C(K-1)h^K = O(nh\log_h n)$.
Fix a $\b \Lambda$ which has size $\dim(\b \Lambda)=j$. { Then \cref{alg2} implies at most $h \cdot h^{K-j-1}$ multiplication operations and $(h-1) \cdot h^{K-j-1}$ addition operations needed for $\b\phi_{\b\Lambda}$. }For computing $\Psi_{(\b\Lambda, m)}$, $m=1,\dots,M_{\b\Lambda}$, we need at most $ 2 \cdot  h^{K-j-1} \cdot \f{h(h-1)}{2} $ multiplication operations and $ h^{K-j-1} \cdot \f{h(h-1)}{2} $ addition operations, respectively. Notice that $\#\v_j \leq C h^{j-1}$.  To compute the nonzero entries of $\b\phi_{  {\b\Lambda}}$ and $\b\psi_{(\b\Lambda,m)}$ for all $\dim(\b\Lambda) = j$ and $m=1,\ldots, M_{\b\Lambda}$, from the above computation, one can see that it  needs at most
$2C({ h^{K-j-1} \cdot h \cdot h^{j-1}} + 2h^{K-j-1} \cdot \f{h(h-1)}{2} \cdot h^{j-1}) = 2C\cdot h^K$ evaluations of multiplications and additions.  Hence, in total,  to compute the nonzero entries of $\b\phi_{  {\b\Lambda}}$ and $\b\psi_{(\b\Lambda,m)}$ for all $\dim(\b\Lambda) = j_0,\ldots,K-1$ and $m=1,\ldots, M_{\b\Lambda}$, it  needs at most
$
		2C\sum_{j=1}^{K-1} \left({ h^{K-j-1} \cdot h \cdot h^{j-1}} + 2h^{K-j-1} \cdot \f{h(h-1)}{2} \cdot h^{j-1} \right)
		=  2C(K-1)h^K = O(nh\log_h n)
$
evaluations of multiplications and additions.
	\end{proof}

Before showing the proof of \cref{thm4}, we want to give some comments on permutations. Notice that the construction of $\b p_{\b\Lambda}$ and $\b B_{\b\Lambda}$ only depends on $L_{\b\Lambda}$. Hence, under node permutation ($\pi$ or $\b P_{\pi}$), it means that we have the following relationship between original $\b\phi_{\b\Lambda}$ and $\b\phi^*_{\b\Lambda}$, $\b\psi_{\b\Lambda}$ and $\b\psi^*_{\b\Lambda}$,
\begin{align}\label{alg2}
	(\b\phi^*_{  {\b\Lambda}})^\top
	:=&
	\b p_{\b \Lambda}^\top
	\begin{bmatrix}
		\b\phi_{(\b\Lambda, 1)}^\top \\ \vdots \\ \b\phi_{{(\b\Lambda, L_{\b\Lambda})}}^\top
	\end{bmatrix}
	\b P_{\pi},\\
	\begin{bmatrix}
		(\b\psi^*_{(\b\Lambda,1)})^\top \\ \vdots \\ (\b\psi^*_{(\b\Lambda,M_{\b\Lambda})})^\top
	\end{bmatrix}
	:=&
	\b B_{\b\Lambda}
	\begin{bmatrix}
		\b\phi_{(\b\Lambda, 1)}^\top \\ \vdots \\ \b\phi_{{(\b\Lambda, L_{\b\Lambda})}}^\top
	\end{bmatrix}\b P_{\pi}.
\end{align}
Under partition permutation $\pi_p$, fixing a $\b\Lambda$ (there exists a permutation matrix $\b Q_{\b\Lambda}$ w.r.t. $\pi_p$ at $\b\Lambda$), we have the following relationship between original $\b\phi_{\b\Lambda}$ and $\b\phi^*_{\b\Lambda}$, $\b\psi_{\b\Lambda}$ and $\b\psi^*_{\b\Lambda}$,
\begin{align}\label{alg2}
	(\b\phi^*_{  {\b\Lambda}})^\top
	:=&
	\b p_{\b \Lambda}^\top \b Q_{\b\Lambda}
	\begin{bmatrix}
		\b\phi_{(\b\Lambda, 1)}^\top \\ \vdots \\ \b\phi_{{(\b\Lambda, L_{\b\Lambda})}}^\top
	\end{bmatrix}
	,\\
	\begin{bmatrix}
		(\b\psi^*_{(\b\Lambda,1)})^\top \\ \vdots \\ (\b\psi^*_{(\b\Lambda,M_{\b\Lambda})})^\top
	\end{bmatrix}
	:=&
	\b B_{\b\Lambda} \b Q_{\b\Lambda}
	\begin{bmatrix}
		\b\phi_{(\b\Lambda, 1)}^\top \\ \vdots \\ \b\phi_{{(\b\Lambda, L_{\b\Lambda})}}^\top
	\end{bmatrix}.
\end{align}

	\begin{proof}[Proof of \cref{thm4}]
		Let $\b\Phi_{ {{\b\Lambda}}}:=[\b\phi_{( {\b\Lambda},1)}, \dots, \b\phi_{( {\b\Lambda},L_{ {{\b\Lambda}}})}]^\top$ and $\b\Psi_{ {{\b\Lambda}}}:=[\b\psi_{( {\b\Lambda},1)}, \dots, \b\psi_{( {\b\Lambda},M_{ {{\b\Lambda}}})}]^\top$.
		Since the scaling vectors $\b\phi_{ {{\b\Lambda}}}^\top = \b p^\top_{ {{\b\Lambda}}} \b\Phi_{ {{\b\Lambda}}}$ are defined iteratively for $\dim(\b\Lambda)$  decreasing from $K$ to $1$ through \cref{1} and the framelets $\b\psi_{(\b\Lambda,m)}$ are given by $\b\Psi_{ {{\b\Lambda}}} = \b B_{ {{\b\Lambda}}}\b\Phi_{ {{\b\Lambda}}}$, we only need to prove the permutation equivariance properties for each $ {{\b\Lambda}}$.
		
For Item (i),  note that by \cref{1} and \cref{cor}, $\b\phi_{(\b\Lambda,\ell)}: \v\rightarrow \mathbb{R}$ only depends on $\g$, $\p_K$, and $\b p_{\b\Lambda}=\f{1}{\sqrt{L_{\b\Lambda}}}\b 1$.  For any node permutation $\pi$,  the $\p_K$ is determined by the index vectors $\b\Lambda$ according to a tree structure and is independent of the node permutation $\pi$.  Moreover, the vectors $\b p_{\b\Lambda}$ are fixed constants. Hence, iteratively, after node permutation $\per$ acting on graph $\g$,  the  new  scaling vector  $\b\phi_{(\b\Lambda,\ell)}^\pi:\pi(\v)\rightarrow \mathbb{R}$ is given by  $\b\phi_{(\b\Lambda,\ell)}^\pi=\b P_\pi\b\phi_{(\b\Lambda,\ell)}$, where $\b P_\pi$ is the permutation matrix with respect to $\pi$. Consequently,  the new $\b\Phi_{ {{\b\Lambda}}}^\pi$ and $\b\Psi^\pi_{ {{\b\Lambda}}}$ on the permuted graph $\pi(\g)$ are given by $\b\Phi^\pi_{ {{\b\Lambda}}} = \b\Phi_{ {{\b\Lambda}}} \b P_\pi$ and  $\b\Psi^\pi_{ {{\b\Lambda}}} = \b B_{ {{\b\Lambda}}} \b\Phi^\pi_{ {{\b\Lambda}}} = \b B_{ {{\b\Lambda}}} \b\Phi_{ {{\b\Lambda}}} \b P_\pi = \b\Psi_{ {{\b\Lambda}}} \b P_\pi$. This implies the conclusion in Item (i).

	For Item (ii), given a partition permutation  $\per_p$ acting on $\p_K$,  We denote $\per_p(\p_K)$ the hierarchical clustering w.r.t. such a $\per_p$.  Let  $\tilde{\b\Lambda}:=\pi_p(\b\Lambda)$ be the permuted index vector $\per_p(\p_K)$ from  the index vector  $\b\Lambda$ in $\p_K$. Since the partition permutation acts on the children of each $\b\Lambda$ only,  we have $\pi_p(\b\Lambda,\ell) = (\tilde{\b\Lambda},\pi_{\b\Lambda}(\ell))$ for some permutation $\pi_{\b\Lambda}$ on $[L_{\b\Lambda}]$. Then,		
 the matrix $\b\Phi_{ \tilde{{\b\Lambda}}}$ is 
\[
{\b\Phi}_{\tilde {{\b\Lambda}}}:=[\b\phi_{( \tilde{\b\Lambda},\pi_{\b\Lambda}(1))}, \dots, \b\phi_{( \tilde{\b\Lambda},\pi_{\b\Lambda}(L_{ {{\b\Lambda}}}))}]^\top=\b P_{\pi_{\b\Lambda}}\b\Phi_{ {{\b\Lambda}}}
\]  
with $\b P_{\pi_{\b\Lambda}}$ being the permutation matrix with respect to $\pi_{\b\Lambda}$.	Then, in view of  $\b p^\top_{ {{\b\Lambda}}} \b P_{\pi_{\b\Lambda}}=\b p^\top_{ {{\b\Lambda}}}$, the permuted scaling vector $\b\phi_{\tilde{\b\Lambda}}$  is given by
\[
\begin{aligned}
(\b\phi_{\tilde{\b\Lambda}})^\top  &= (\b\phi_{\pi_p({\b\Lambda})})^\top=\b p^\top_{ {{\b\Lambda}}} (\b P_{\pi_{\b\Lambda}}\b\Phi_{\b\Lambda})\\&= (\b p^\top_{ {{\b\Lambda}}} \b P_{\pi_{\b\Lambda}})\b\Phi_{\b\Lambda}= \b p^\top_{ {{\b\Lambda}}} \b\Phi_{\b\Lambda}\b = \b\phi_{\b\Lambda}^\top,
\end{aligned}
\]
That is, the new scaling vectors in $\{\b\phi_{\tilde{\b\Lambda}}:\dim({\tilde{\b\Lambda}})=j\}$ are simply the recording of $\{\b\phi_{{\b\Lambda}}:\dim({{\b\Lambda}})=j\}$ under $\pi_p$ for $j=0,\ldots,K$.  Thus, all scaling vectors are invariant (up to index permutation) under the partition permutation $\pi_p$.  Now for the framelet vectors $\b\psi_{(\b\Lambda,m)}$, by \cref{2}, we have 
\[
\b\Psi_{ {\tilde{\b\Lambda}}} =  \b B_{\b\Lambda}  \b\Phi_{ {\tilde{\b\Lambda}}} =  \b B_{ {{\b\Lambda}}}  \b P_{\pi_ {{\b\Lambda}}}  \b\Phi_{ {{\b\Lambda}}}.
\] 
We claim that  there exist $M_{\b\Lambda}\times M_{\b\Lambda}$ permutation matrix $\b R_{ {{\b\Lambda}}}$ and sign matrix $\b S_{ {{\b\Lambda}}}=\mathrm{diag}(c_1,\ldots,c_{M_{\b\Lambda}})$ with all $c_i\in\{-1,+1\}$  such that $\b B_{ {{\b\Lambda}}}\b P_{ \pi_{{\b\Lambda}}} = \b S_{ {{\b\Lambda}}}\b R_{ {{\b\Lambda}}}\b B_{ {{\b\Lambda}}}$. Then, we have 
\[ 
\b\Psi_{ {\tilde{\b\Lambda}}} =  \b B_{ {{\b\Lambda}}}  \b P_{\pi_ {{\b\Lambda}}}  \b\Phi_{ {{\b\Lambda}}} =  \b S_{ {{\b\Lambda}}}  \b R_{ {{\b\Lambda}}}  \b B_{ {{\b\Lambda}}}   \b\Phi_{ {{\b\Lambda}}} =  \b S_{ {{\b\Lambda}}}  \b R_{ {{\b\Lambda}}} \b\Psi_{ {{\b\Lambda}}},
\] 
which then concludes Item (ii). 
Noting that $\b B_{ {{\b\Lambda}}}  \b P_{ \pi_{{\b\Lambda}}}$ is to reorder the columns of $\b B_{ {{\b\Lambda}}}  $ and regardless the sign, all elements appear in each column with the same times and $1$ (or $-1$) appears in rows of $\b B_{\b\Lambda}$ once. In other words, $(\b B_{\b\Lambda} \b P_{\pi_{\b\Lambda}} )_{r:}= \b w^\top \b P_r^\top \b P_{\pi_{\b\Lambda}}$, which is to permute $\b w=[1,-1,0,\dots,0]^\top$ (up to a constant) with respect to  $\b P_r^\top \b P_{\pi_{\b\Lambda}}$. Since $\rank(\b B_{\b\Lambda})=L-1$ and $\b B_{\b\Lambda} \b 1 =\b 0$, we have $\b P \b w \in \Span \{\b P_m \b w \}_{m=1}^{M_{\b\Lambda}} $ for any permutation matrix $\b P$. Thus for any $r$, there exists exactly one $j\in[M_{\b\Lambda}]$ such that $(\b B_{\b\Lambda} \b P_{\pi_{\b\Lambda}} )_{r:}= \b w^\top \b P_r^\top\b P_{\pi_{\b\Lambda}}  = c \b w^\top \b P_j^\top$ where $c$ is either $1$ or $-1$. Hence the claim holds. This completes the proof of Item (ii).

The proof of Item (iii) is a direct consequence of Items (i) and (ii). 
	\end{proof}

\begin{proof}[Proof of \propref{prop:model}]
From Item (i) of \cref{thm4} , we see that the corresponding permuted versions of $\b{\Phi}_1$ and $\b{\Psi}_j$ are $\b{\Phi}_1\b{P}$ and $\b{\Psi}_j\b{P}$. Thus $\b{F}_j(\b{P}\b{X})\ab = \ab\b{P}\b{F}_j(\b{X})$, $\ab \b{F}_j(\b{P}\b{A}\b{P}^\top\b{P}\b{X})\ab = \ab \b{P}\b{F}_j(\b{A}\b{X})$, \ab $\b{F}_j(\b{P}\tilde{\b{A}}\b{P}^\top\b{P}\b{X})\ab =\ab \b{P}\b{F}_j(\tilde{\b{A}}\b{X})$ for $\ab j=0,\ab\dots,\ab K-1$. It is obvious that the remaining channels also differ by a permutation matrix $\b{P}$. Since the row normalization and the softmax function are applied row-wise and the activation function is applied element-wise, it is straightforward to see that $\hat{\b{Y}}_{\b{P}} = \b{P}\hat{\b{Y}}$.
\end{proof}	

%
		
	\subsection{Fast Decomposition and Reconstruction Algorithms}
	\label{app:alg}
	
	Given a $K$-hierarchical clustering $\p_K$, we consider graph Haar framelet transform between $V_{j+1}$ and $V_{j}\oplus W_{j}$. Define $x_{(\b\Lambda,\ell)}:= \langle \b f, \b\phi_{{(\b\Lambda, \ell)}} \rangle$ and $y_{(\b\Lambda,m)}:= \langle \b f , \b\psi_{  {(\b\Lambda,m)}} \rangle$ for a given graph signal $\b f$. The transform algorithm is to evaluate $x_{(\b\Lambda,\ell)}$ and $y_{(\b\Lambda,m)}$ effectively. Let $\b C_{ {{\b\Lambda}}} \in \RR^{L_{ {{\b\Lambda}}}\times (1+M_{ {{\b\Lambda}}})}$ be a matrix satisfying $\b C_{\b\Lambda} \binom{\b p_{ {{\b\Lambda}}}^\top}{\b B_{ {{\b\Lambda}}}}=\b I \in \RR^{L_{ {{\b\Lambda}}}\times L_{ {{\b\Lambda}}}}$. Then Lemma 1 in \cite{li2022convolutional} and \cref{1,2} imply that
	\begin{align}\label{ccc}
		\begin{bmatrix}
			\b \phi_{ {{\b\Lambda}}}^\top \\ \b\psi_{(\b\Lambda,1)}^\top \\ \vdots \\ \b\psi_{(\b\Lambda,M_{\b\Lambda})}^\top
		\end{bmatrix}
		:=&
		\binom{\b p_{ {{\b\Lambda}}}^\top}{\b B_{ {{\b\Lambda}}}}
		\begin{bmatrix}
			\b\phi_{(\b\Lambda, 1)}^\top \\ \vdots \\ \b\phi_{{(\b\Lambda, L_{\b\Lambda})}}^\top
		\end{bmatrix}, \\
		\begin{bmatrix}
			\b\phi_{(\b\Lambda, 1)}^\top \\ \vdots \\ \b\phi_{{(\b\Lambda, L_{\b\Lambda})}}^\top
		\end{bmatrix}
		:=& \b C_{\b \Lambda}
		\begin{bmatrix}
			\b \phi_{ {{\b\Lambda}}}^\top \\ \b\psi_{(\b\Lambda,1)}^\top \\ \vdots \\ \b\psi_{(\b\Lambda,M_{\b\Lambda})}^\top
		\end{bmatrix}
		.
	\end{align}
	For the decomposition algorithm, we are given a signal $\b f \in V_{j+1}$, which means that 
	\[\b f
	= \sum_{\dim( {{\b\Lambda}})=j}\sum_{\ell\in [L_{ {{\b\Lambda}}}]} x_{( {{\b\Lambda}},\ell)} \b\phi_{( {{\b\Lambda}},\ell)}.
	\] 
	By \cref{ccc}, we have
	\begin{align}\label{dec}
	\begin{tiny}
		\begin{aligned}
			\b f
			&= \sum_{\dim( {{\b\Lambda}})=j}\sum_{\ell\in [L_{ {{\b\Lambda}}}]} x_{( {{\b\Lambda}},\ell)} \b\phi_{( {{\b\Lambda}},\ell)} \\
			&= \sum_{\dim( {{\b\Lambda}})=j} \sum_{\ell\in [L_{ {{\b\Lambda}}}]} x_{( {{\b\Lambda}},\ell)}\left( \b (C_{\b\Lambda})_{\ell,1}\b\phi_{ {\b\Lambda}} + \sum_{m\in [M_{ {{\b\Lambda}}}]} (C_{\b\Lambda})_{\ell,m+1}\b\psi_{(  {\b\Lambda},m)} \right) \\
			&= \sum_{\dim( {{\b\Lambda}})=j}\b\phi_{ {\b\Lambda}} \sum_{\ell\in [L_{ {{\b\Lambda}}}]} x_{( {{\b\Lambda}},\ell)} (C_{\b\Lambda})_{\ell,1} 
			\\&\qquad+ \sum_{\dim( {{\b\Lambda}})=j} \sum_{m\in [M_{ {{\b\Lambda}}}]}\b\psi_{(  {\b\Lambda},m)}\sum_{\ell\in [L_{ {{\b\Lambda}}}]}  x_{( {{\b\Lambda}},\ell)} (C_{\b\Lambda})_{\ell,m+1} \\
			&= \sum_{\dim( {{\b\Lambda}})=j} x_{\b\Lambda} \b\phi_{ {\b\Lambda}} 
			+ \sum_{\dim( {{\b\Lambda}})=j} \sum_{m\in [M_{ {{\b\Lambda}}}]} y_{(\b\Lambda,m)} \b\psi_{(  {\b\Lambda},m)},
		\end{aligned}
		\end{tiny}
	\end{align}
	where we can represent decomposition of $\b f$ with respect to each $\b \Lambda$ as  
	\begin{align}\label{dec11}
		\begin{aligned}
			&\begin{bmatrix}
				x_{ {{\b\Lambda}}} ,& y_{(\b\Lambda,1)}, & \cdots ,& y_{(\b\Lambda,M_{\b\Lambda})}
			\end{bmatrix}\\
			= &
			\begin{bmatrix}
				x_{(\b\Lambda,1)}, & x_{(\b\Lambda,2)}, & \cdots, & x_{(\b\Lambda,L_{\b\Lambda})}
			\end{bmatrix}
			\b C_{\b\Lambda}.
		\end{aligned}
	\end{align}
	
	Conversely, if we have $\b f \in V_j \oplus W_j$, which means that 
	\[
	\b f = \sum_{\dim( {{\b\Lambda}})=j}x_{ {{\b\Lambda}}}
	\b\phi_{ {\b\Lambda}} + \sum_{\dim( {{\b\Lambda}})=j} \sum_{m\in [M_{ {{\b\Lambda}}}]} y_{( {{\b\Lambda}},m)} \b\psi_{(  {\b\Lambda},m)},
	\] 
	then,	 by \cref{ccc}, for the reconstruction from $V_{j}\oplus W_{j}$ to $V_{j+1}$, we have
	\begin{align}\label{rec}
		\begin{aligned}
			\b f &= \sum_{\dim( {{\b\Lambda}})=j}x_{ {{\b\Lambda}}}
			\b\phi_{ {\b\Lambda}} + \sum_{\dim( {{\b\Lambda}})=j} \sum_{m\in [M_{ {{\b\Lambda}}}]} y_{( {{\b\Lambda}},m)} \b\psi_{(  {\b\Lambda},m)}\\
			&=\sum_{\dim( {{\b\Lambda}})=j}  \sum_{\ell\in [L_{ {{\b\Lambda}}}]} \left(\b r_{ {{\b\Lambda}}}\right)_\ell \b\phi_{(  {\b\Lambda}, \ell)}\\
			&=\sum_{\dim( {{\b\Lambda}})=j}  \sum_{\ell\in [L_{ {{\b\Lambda}}}]} x_{(\b\Lambda,\ell)} \b\phi_{(  {\b\Lambda}, \ell)}		,
		\end{aligned}
	\end{align}
	where
	\begin{align}\label{rec22}
		\b r_{ {{\b\Lambda}}} = x_{ {{\b\Lambda}}}
		\b p_{ {{\b\Lambda}}}^\top +   \b y_{ {{\b\Lambda}}}^\top \b B_{ {{\b\Lambda}}}, \mbox{ with }
		\b y_{ {{\b\Lambda}}}  = [y_{( {{\b\Lambda}},1)},\dots, y_{( {{\b\Lambda}},M_{ {{\b\Lambda}}})}]^\top,
	\end{align} 
	and	 $x_{(\b\Lambda,\ell)} := \b (r_{\b\Lambda})_\ell$.
	
	\begin{algorithm}[htpb!]
		\caption{Fast framelet decomposition}
		\label{alg:dec}
		\begin{algorithmic}
			\State {\bfseries Input:} $\p_K$, $\{ x_{\b\Lambda}:   \dim(\b\Lambda) = j_0 \}$, $\{\b C_{\b\Lambda}\}$, $j_1$
			\State initialize $\hat{\b f}=\emptyset$.
			\For{$j=j_0-1$ {\bfseries to} $j_1$}
			\For{$\b\Lambda \in  \{\b\Lambda:\dim(\b\Lambda)=j\}$}
			\State $[x_{ {{\b\Lambda}}} , y_{(\b\Lambda,1)},  \cdots , y_{(\b\Lambda,M_{\b\Lambda})}]
			\leftarrow 
			[x_{(\b\Lambda,1)},  x_{(\b\Lambda,2)},  \cdots,  x_{(\b\Lambda,L_{\b\Lambda})}]
			\b C_{\b\Lambda}$
			\State update $\hat{\b f} \leftarrow \hat{\b f} \cup \{x_{  {\b\Lambda}},y_{(  {\b\Lambda}, m)},m=1,\dots,M_{\b\Lambda} \}$
			\EndFor
			\EndFor
			\State {\bfseries Output:} $\mathcal{F}_{j_0}(\p_K)$
		\end{algorithmic}
	\end{algorithm}
	
	\begin{algorithm}[htpb!]
		\caption{Fast framelet reconstruction}
		\label{alg:rec}
		\begin{algorithmic}
			\State {\bfseries Input:} $\p_K$, $\{ x_{\b\Lambda}:   \dim(\b\Lambda) = j_0 \} \cup \{ y_{(\b\Lambda,m)}:\dim(\b\Lambda) = j_0, m\in [M_{\b\Lambda}]  \}$, $\{\b p_{\b\Lambda}, \b C_{\b\Lambda}\}$, $j_1$
			\State initialize $\b f=\emptyset$.
			\For{$j=j_0+1$ {\bfseries to} $j_1$}
			\For{$\b\Lambda \in  \{\b\Lambda:\dim(\b\Lambda)=j\}$}
			\State $\b r_{ {{\b\Lambda}}} = x_{ {{\b\Lambda}}}
			\b p_{ {{\b\Lambda}}}^\top +   \b y_{ {{\b\Lambda}}}^\top \b B_{ {{\b\Lambda}}}$
			\For{$\ell=1$ {\bfseries to} $L_{\b\Lambda}$}
			\State $x_{(\b\Lambda,\ell)} \leftarrow (\b r_{\b\Lambda})_\ell$
			\EndFor
			\State update $\b f \leftarrow \b f \cup \{x_{  (\b\Lambda,\ell) } \}_{\ell \in [L_{\b\Lambda}]}$
			\EndFor
			\EndFor
			\State {\bfseries Output:} $\b f$
		\end{algorithmic}
	\end{algorithm}
	
	Hence, by using \cref{dec11} iteratively from $V_K$, given a framelet system $\mathcal{F}_{j_0}(\p_K)$ and a graph signal $\b f$,  we get the coefficient vector $\hat {\b f}$ consisting of coefficients from 
	\begin{equation}
		\label{f-to-hat-f}
		\b f\mapsto \{x_{\b\Lambda}: \dim(\b\Lambda)= j_0\}\cup \{y_{\b\Lambda} :\dim(\b\Lambda) = j\}_{j=j_0+1}^{K-1}
	\end{equation}
	with respect to $V_{j_0} \oplus W_{j_0} \oplus \cdots \oplus W_{K-1}$. In the reconstruction process, we iteratively obtain the representation of $\b f$ in $V_K$ from coefficient vector $\hat {\b f}$:
	\begin{equation}
		\label{f-from-hat-f}
		\{x_{\b\Lambda}: \dim(\b\Lambda)= j_0\}\cup \{y_{\b\Lambda} :\dim(\b\Lambda) = j\}_{j=j_0+1}^{K-1}\mapsto {\b f}
	\end{equation}
	with respect to  $V_{j_0} \oplus W_{j_0} \oplus \cdots \oplus W_{K-1}$.
	
	From the above discussion, we observe that decomposition and reconstruction algorithms do not need to form the full framelet system explicitly, but $\b p_{ {{\b\Lambda}}}$, $\b B_{ {{\b\Lambda}}}$ and $\b C_{ {{\b\Lambda}}}$, which implies efficiency in general applications that apply our framelet system.

	\begin{theorem}\label{fast_alg}
		Under the same assumption as in \cref{cor}.  The decomposition algorithm to obtain the framelet coefficient vector $\hat{\b f}$ from $\b f$  and the reconstruction algorithm to obtain the graph signal $\b f$ from $\hat{\b f}$, as described in \cref{f-to-hat-f,f-from-hat-f}, are both with a computational complexity of order $O(nh)$.
	\end{theorem}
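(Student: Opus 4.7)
The plan is to bound the arithmetic work at each internal tree node $\b\Lambda$ and then aggregate across all levels $j=j_0,\ldots,K-1$, reusing the counting already carried out in the proof of \cref{fast1}. The two essential ingredients are: (a) the binary Haar structure, which makes each row of $\b B_{\b\Lambda}$ carry exactly two nonzero entries, and (b) the observation that $[\b p_{\b\Lambda},\b B_{\b\Lambda}^\top]$ is orthogonal (by \cref{B} with $c=1$), so one may take $\b C_{\b\Lambda}=[\b p_{\b\Lambda},\b B_{\b\Lambda}^\top]$, inheriting the same sparsity as $\b B_{\b\Lambda}$ plus a dense first column of length $L_{\b\Lambda}$.

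For the decomposition step \cref{dec11}, at a fixed internal node $\b\Lambda$ the scaling coefficient reduces to $x_{\b\Lambda}=\tfrac{1}{\sqrt{L_{\b\Lambda}}}\sum_{\ell} x_{(\b\Lambda,\ell)}$, which uses $O(L_{\b\Lambda})$ operations, and each framelet coefficient reduces to a single weighted difference $y_{(\b\Lambda,m)}=\tfrac{1}{\sqrt{L_{\b\Lambda}}}(x_{(\b\Lambda,\ell_1)}-x_{(\b\Lambda,\ell_2)})$ for the pair $(\ell_1,\ell_2)$ encoded by $m$, costing $O(1)$ per coefficient and $O(M_{\b\Lambda})$ in total. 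Hence the per-node decomposition cost is $O(L_{\b\Lambda}+M_{\b\Lambda})$. Summing over internal nodes, $\sum_{\b\Lambda}L_{\b\Lambda}$ counts non-root tree nodes, which is $O(n)$ since under $n=O(h^{K-1})$ the tree has at most $1+h+\cdots+h^{K-1}=O(n)$ nodes in total; and $\sum_{\b\Lambda}M_{\b\Lambda}$ is exactly the total framelet count $M_\g=O(nh)$ established in \cref{fast1}. Adding yields $O(nh)+O(n)=O(nh)$.

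The reconstruction step \cref{rec22} is handled symmetrically. The term $x_{\b\Lambda}\b p_{\b\Lambda}^\top$ is a scalar--vector multiply of cost $O(L_{\b\Lambda})$, while $\b y_{\b\Lambda}^\top\b B_{\b\Lambda}=\sum_{m} y_{(\b\Lambda,m)}(\b B_{\b\Lambda})_{m,:}$ accumulates into $\b r_{\b\Lambda}$ using only two additions (and one multiplication by $\tfrac{\pm 1}{\sqrt{L_{\b\Lambda}}}$) per row of $\b B_{\b\Lambda}$, again giving $O(M_{\b\Lambda})$ work per node. The same tree-wise summation then produces the $O(nh)$ bound.

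The main subtlety is not the per-node arithmetic, which is essentially immediate once the two-nonzero-per-row structure of $\b B_{\b\Lambda}$ is invoked, but verifying that the telescoping tree sums give the claimed totals. The bound $\sum_{\b\Lambda}M_{\b\Lambda}=O(nh)$ is precisely the framelet-count estimate of \cref{fast1}, so I would cite that directly rather than reprove it; the bound $\sum_{\b\Lambda}L_{\b\Lambda}=O(n)$ follows because $\sum_{\dim(\b\Lambda)=j}L_{\b\Lambda}=\#\v_{j+1}$, so the sum telescopes to $\sum_{j=j_0+1}^{K}\#\v_j=O(n)$ under the balance assumption $L_{\b\Lambda}\le h$ and $n=O(h^{K-1})$. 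No blow-up in sparse matrix storage or indexing is needed since the only matrices touched at $\b\Lambda$ are $\b p_{\b\Lambda}$ and the two-nonzero-per-row matrix $\b B_{\b\Lambda}$, both assumed available.
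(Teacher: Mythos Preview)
Your proposal is correct and follows essentially the same approach as the paper's proof: both identify $\b C_{\b\Lambda}=[\b p_{\b\Lambda},\b B_{\b\Lambda}^\top]$, exploit the two-nonzero-per-row structure of $\b B_{\b\Lambda}$ to bound the per-node work, and then sum over the tree. The only cosmetic difference is the bookkeeping---the paper bounds each level uniformly by $(h+h(h-1))\cdot h^{j-1}$ and sums over $j$, whereas you separate the scaling and framelet contributions as $\sum_{\b\Lambda}L_{\b\Lambda}=O(n)$ and $\sum_{\b\Lambda}M_{\b\Lambda}=O(nh)$ and cite \cref{fast1} for the latter---but the argument is the same.
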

	\begin{proof}[Proof of \cref{fast_alg}]
		A fast decomposition algorithm is given by \cref{dec}, which computes $\hat{\b f}$ iteratively. In order to get $x_{\b\Lambda}$ and $y_{(\b\Lambda,m)}$ for $\hat {\b f}$, we need to compute $\sum_{\ell\in [L_{ {{\b\Lambda}}}]} x_{( {{\b\Lambda}},\ell)} (\b C_{\b\Lambda})_{\ell, t} = (\b q_{\b\Lambda}^\top\b C_{\b\Lambda})_t$, where $t = 1, \dots, M_{\b\Lambda}+1, \ell \in [L_{\b\Lambda}]$ and  $\b q_{\b\Lambda}:= [x_{(\b\Lambda,1)},\dots, x_{(\b\Lambda,L_{\b\Lambda})}]^\top$ (see \cref{dec11}). Note that for our binary graph Haar framelet system $\mathcal{F}_{j_0}(\p_K)$, the matrix $\b  C_{\b\Lambda}$ in \cref{ccc} is given by $\b  C_{\b\Lambda} = [\b p_{\b\Lambda}, \b B_{\b\Lambda}^\top]$ and each row of $\b B_{\b\Lambda}$ has only two nonzero elements. Hence, for a given $\b\Lambda$ with $\dim(\b\Lambda)=j$, since $L_{\b\Lambda}\leq h$ and $M_{\b\Lambda}\leq \f{h(h-1)}{2}$, the number of nonzero elements in $\b C_{\b\Lambda}$ is no more than $ h + 2\cdot \f{h(h-1)}{2}$. Therefore, the computational complexity for obtaining $\b q _{\b\Lambda}^\top \b C$ is of the same order as $h + 2\cdot \f{h(h-1)}{2}$. In total, observing that $\#\{\b\Lambda:\dim(\b\Lambda)=j \} \leq h^{j-1}$, to get the full $\hat{\b f}$, the computational complexity is of order the same as $\sum_{j=1}^{K-1} (h + 2 \cdot \f{h(h-1)}{2}) h^{j-1}  = O(nh)$.
		
		Fast reconstruction algorithm (\cref{rec}) which computes $\b f$ from $\hat{\b f}$ only need to compute $\b r_{\b\Lambda}$ iteratively. Let $\b Y_{\b\Lambda}:=[x_{\b\Lambda}, \b y_{\b\Lambda}^\top]^\top \in \RR^{M_{\b\Lambda}+1}$ and $\b P_{\b\Lambda}:= \binom{\b p_{ {{\b\Lambda}}}^\top}{\b B_{ {{\b\Lambda}}}}$. Then $\b P_{\b\Lambda} = \b C_{\b\Lambda}^\top$ and $\b r_{\b\Lambda} = \b Y_{\b\Lambda}^\top \b P_{\b\Lambda} $. Following a similar calculation as for the fast decomposition algorithm, it is not hard to see the computation complexity is of $\sum_{j=1}^{K-1} h^{j-1} (h + 2\cdot \f{h(h-1)}{2} ) = O(nh)$.	
	\end{proof}

	\subsection{Experiment Details on the Synthetic Dataset}
	\label{appen_d}
	%
	
	\begin{figure*}[ht]\label{CCNS}
		\centering
		\subfloat[$\gamma = 0$]{\includegraphics[width=0.3\linewidth]{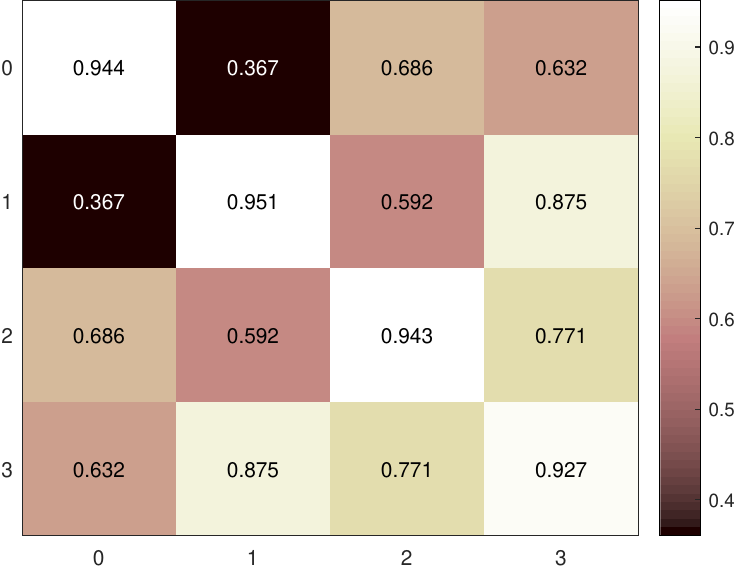}}
		\subfloat[$\gamma = 0.2$]{\includegraphics[width=0.3\linewidth]{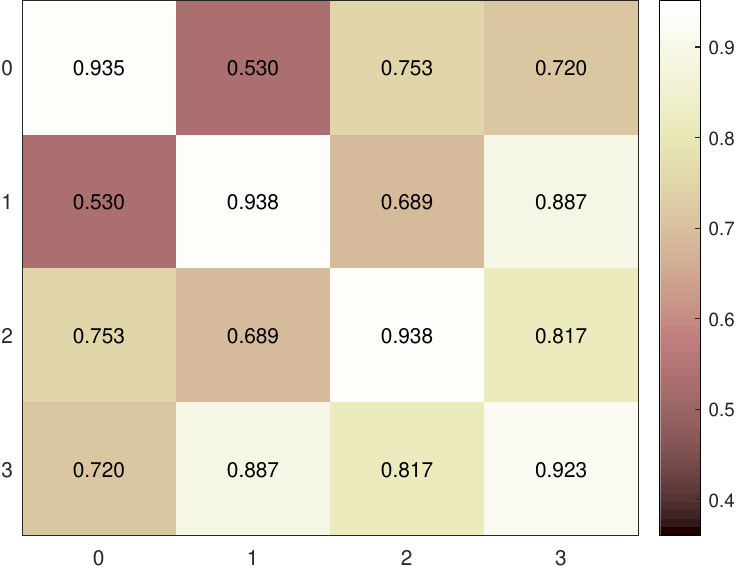}}
		\subfloat[$\gamma = 0.4$]{\includegraphics[width=0.3\linewidth]{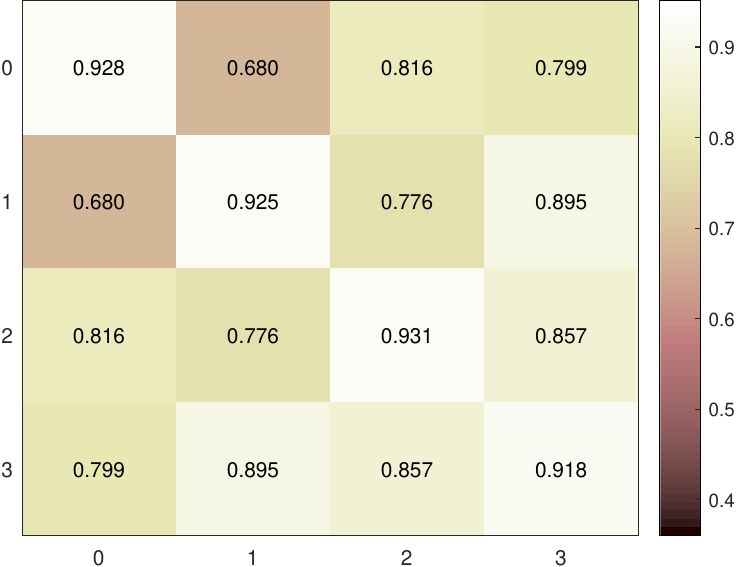}}
		\\
		\subfloat[$\gamma = 0.6$]{\includegraphics[width=0.3\linewidth]{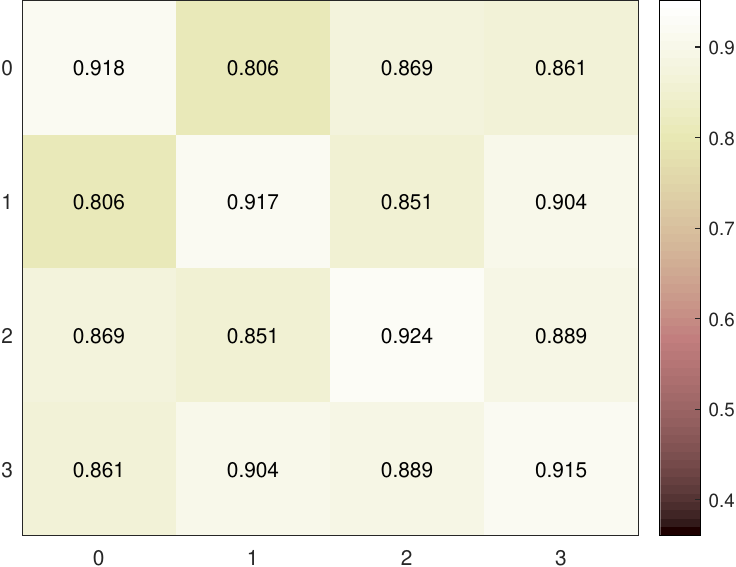}}
		\subfloat[$\gamma = 0.8$]{\includegraphics[width=0.3\linewidth]{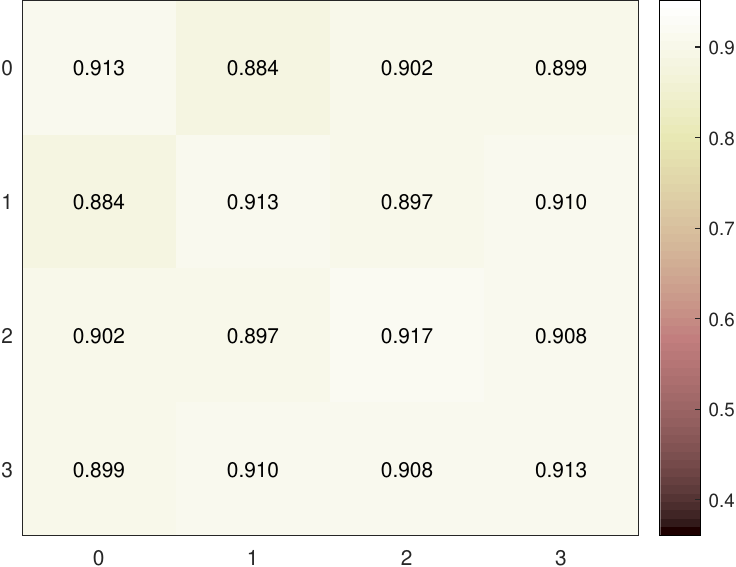}}
		\subfloat[$\gamma = 1$]{\includegraphics[width=0.3\linewidth]{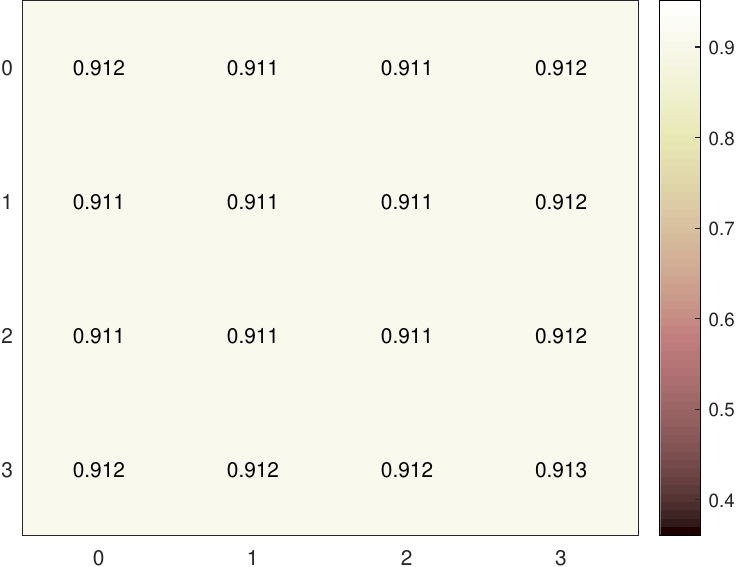}}
		
		\caption{CCNS on the synthetic graphs with different hyperparameters $\gamma$.}
		\label{heatmap}
	\end{figure*}
	
	\cite{ISGCN} gives a theoretical characterization of graphs on which GCN fails to produce acceptable performance. We follow and modify Algorithm 2 in  \cite{ISGCN} to generate synthetic data.
	
	The key idea of the algorithm is to generate edges of nodes in a graph such that the intra-class and inter-class similarities are properly controlled. The intra-class and inter-class similarity are defined by \cref{CCNS}. Cross-class neighborhood similarity (CCNS) measures how close the patterns of connections of nodes between two classes are. In our experiment, we generate graphs with $3,000$ nodes for which assign labels from $\c = \{0,1,2,3\} $ randomly. It means that we have $n_c=4$ classes. We generate edges according to \cref{alg:example}. The distributions that control CCNS are designed based on the uniform distribution and a prescribed distribution $\{\d_{c}: c \in \c \}$. The distributions $\{\d_{c}: c \in \c \}$ can be found in \cref{DC}.  Integer $N$ is set to be $45,000$.
	The hyperparameter $\gamma \in \{0,0.2,0.4,0.6,0.8,1\}$ indicates the probability of sampling neighbors from uniform distributions other than the predefined distributions that are much more distinguishable for different classes. When $\gamma$ is small,  vertices of the neighborhood are more likely to be sampled according to $\{\d_{c}: c \in \c \}$ and when $\gamma$ is large, it is more likely to sample from the indistinguishable uniform distribution. As a result, when $\gamma$ becomes larger, the uniform distribution has more impact on CCNS and thus the metric becomes more similar between classes.
	We evaluate CCNSs on the generated graphs and show heatmaps in \cref{heatmap}. It is clear that, when $\gamma=0$, CCNS is dominated by $\{\d_{c}: c \in \c \}$, and since $\d_{0}$ is more similar to $\d_{2}$ than $\d_{1}$, we get $s(0,1)=0.367 \leq 0.686 = s(0,2)$.  And finally when $\gamma = 1$, all CCNS are almost $0.91$. Notice that in \cref{alg:example}, we slightly modify the algorithm in \cite{ISGCN}. Since we generate graphs with only nodes initialized, when $r\leq \gamma$, we sample label $c$ from all labels $\c$, instead of $\c - \{y_i \}$ used in \cite{ISGCN}. \cref{HP} shows the hyperparameter search range for the experiments of FSGNN and PEGFAN on the synthetic data, where $\{WD_{sca},LR_{sca},WD_{fc1},WD_{fc2},LR_{fc}\}$ are the weight decay coefficient of attention weights, learning rate of attention weights, weight decay coefficient of the first fully-connected layer, weight decay coefficient of the second fully-connected layer and learning rate of the fully-connected part, respectively.
	
	\begin{definition}[Cross-Class Neighborhood Similarity (CCNS) \cite{ISGCN}]\label{CCNS}
		Given graph $\g$ and node labels $y_i \in \{0,1,\dots, n_c-1\}$ for $i \in \v$. the metric between classes $c$ and $c'$ is $s(c,c') = \f{1}{|\v_c||\v_{c'}|}\sum_{i\in \v_c, j \in \v{c'}} \cos \langle d(i), d(j) \rangle $, where $\v_c := \{ i\in \v: y_i= c \}$ and $d(i) \in \RR^{n_c}$ is a vector with elements defined by $ \# \{ j: (i,j)\in \w, y_j= c \} $ for any $c\in \{0,1,\dots, n_c-1\}$.
	\end{definition}

	\begin{table}[h]
		\caption{Distribution $D_4$}
		\centering
		\begin{tabular}{c|l|l|l|l}
			\toprule
			$\{\d_{c}: c \in \c \}$ & class 0 & class 1 & class 2 & class 3  \\
			\midrule
			$\d_0$ & 0.1 & 0.4 & 0 & 0.5 \\
			$\d_1$ & 0.5 & 0 & 0.5 & 0 \\
			$\d_2$ & 0.2 & 0 & 0.5 & 0.3 \\
			$\d_3$ & 0.25 & 0.25 & 0.25 & 0.25 \\
			\bottomrule
		\end{tabular}
		
		\label{DC}
	\end{table}
\begin{table}[htpb!]
\caption{Hyperparameter search range}
\centering
\begin{tabular}{l l}
\toprule
Hyperparameters & Values\\ \midrule
$WD_{sca}$&{$0.0,0.001,0.1$}\\
$LR_{sca}$&{$0.01,0.04$}\\
$WD_{fc1}$&{$0.0,0.0001,0.001$}\\
$WD_{fc2}$&{$0.0,0.0001,0.001$}\\
$LR_{fc}$&{$0.005,0.01$}\\
\bottomrule
\end{tabular}
\label{HP}
\end{table}
	
	Features on graph nodes are from $\RR^{700}$, with each element randomly generated according to Gaussian distribution $ (- \f{9}{2} + \f{1}{2}c)+\xi$ (Table \ref{tab:normal1}) or $ (- \f{3}{4} + \f{1}{2}c)+\xi$ (Table \ref{tab:normal2}) independently, where $\xi \sim N(0,1)$ and $c$ is the label.

	
	\begin{algorithm}[htpb!]
		\caption{\cite{ISGCN}}
		\label{alg:example}
		\begin{algorithmic}
			\State {\bfseries Input:} Nodes $\v$, Integer $N$, Distribution matrix $\{\d_{c}: c \in \c \}$, labels $\c = \{c\}_{i=0}^{n_c-1}$, $\gamma$
			\State initialize $\w = \emptyset$ and $k = 0$;
			\While{$k \leq N$ }
			\State Sample $i \in \v$ and $r \in [0,1]$ uniformly
			\State Obtain the label $y_i \in \c$ of node $i$
			\If{$r \leq \gamma$}
			\State Sample a label $c$ from $\c$ uniformly
			\Else
			\State Sample a label $c$ from $\c$ according to distribution $\d_{y_i}$
			\EndIf
			\State Sample node $j$ from class $c$ uniformly
			\If {$(i,j)\notin \w$}
			\State update $\w \leftarrow \w \cup (i,j)$
			\State update $k \leftarrow k+1$
			\EndIf
			\EndWhile
			\State {\bfseries Output:} $\g = (\v,\w)$
		\end{algorithmic}
	\end{algorithm}

\end{appendix}

\bibliography{egbib1}
\bibliographystyle{IEEEtran}

\vfill

\end{document}